\setlist[itemize]{leftmargin=.5in}
\newcounter{taggedeq}
\pretocmd{\equation}{\stepcounter{taggedeq}}{}{}
\newmdenv[
topline=false,
bottomline=false,
rightline=false,
skipabove=\topsep,
skipbelow=\topsep,
linewidth=4
]{siderules}
\newcommand{\requ}{\text{ReQU}}
\newcommand{\repu}{\text{RePU}}
\newtheorem{theorem}{Theorem}
\newtheorem{remark}[theorem]{Remark}
\newtheorem{definition}[theorem]{Definition}
\newtheorem{lemma}[theorem]{Lemma}
\newtheorem{corollary}[theorem]{Corollary}
\newtheorem*{proposition*}{Proposition}
\newcommand*\dx{\mathop{}\!\mathrm{d}}
\DeclareMathOperator{\sgn}{sgn}
\title{$\mathcal{C}^1$-approximation with rational functions and rational neural networks}
\author{Martin Holler \thanks{IDea\_Lab - The Interdisciplinary Digital Lab at the University of Graz. MH further is a member of NAWI Graz (\href{https://www.nawigraz.at}{www.nawigraz.at}) and of BioTechMed Graz (\href{https://biotechmedgraz.at}{biotechmedgraz.at}) (\href{mailto:martin.holler@uni-graz.at}{martin.holler@uni-graz.at})} \and Erion Morina \thanks{Corresponding author. IDea\_Lab - The Interdisciplinary Digital Lab at the University of Graz.
    (\href{mailto:erion.morina@uni-graz.at}{erion.morina@uni-graz.at}).}}
\begin{document}	
	\maketitle
	\begin{abstract}
	We show that suitably regular functions can be approximated in the $\mathcal{C}^1$-norm both with rational functions and rational neural networks, including approximation rates with respect to width and depth of the network, and degree of the rational functions. As consequence of our results, we further obtain $\mathcal{C}^1$-approximation results for rational neural networks with the $\text{EQL}^\div$ and ParFam architecture, both of which are important in particular in the context of symbolic regression for physical law learning.
	\end{abstract}
	\begin{keywords}
		Rational functions, rational neural networks, approximation, higher order, ReQU, $\text{EQL}^\div$ architecture, physical law learning\\
	\end{keywords}
	\begin{MSCcodes}
		33F05, 41A20, 41A25, 26C15
	\end{MSCcodes}
	\section{Introduction}
	In the past decades, intensive efforts have been made to study neural networks, which are a fundamental building block of modern machine learning algorithms. One reason for their popularity lies in their universal approximation property, which most architectures share to some extent. This means that, for sufficiently large networks, it is possible to achieve an arbitrarily good approximation of appropriate functions in a suitable norm. The works of \cite{devore21, elbraecther21, gribonval20} provide an excellent overview of this aspect. Apart from the fact that approximation quality can be achieved by adjusting the network size, there is a strong awareness in the community that the choice of activation function plays an essential role in the network design when it comes to achievable practical performance. This affects vanishing gradients for saturated activation functions or, conversely, exploding gradients for higher-order polynomial activation functions \cite{bengio94}. 
The choice of activation function is also becoming increasingly important for approximation result: While classical approximation results such as \cite{mhaskar} hold for a large class of activation functions, works such as
\cite{belomestny23} exploit advantages of particular activation functions, in this case \textit{Rectified Quadratic Units} (ReQU), to achieve improved approximation results.

	An important class of activation functions that has recently attracted considerable interest are rational functions, which, in short, are fractions of polynomials with positive denominator polynomials. A pivotal work in this direction comes from Telgarksy \cite{telgarsky17}, in which the approximation of ReLU neural networks by high-degree rational functions and vice versa is proven with error bounds that are polynomial-logarithmic with respect to the degree of the approximating rational function and the size of the approximating network, respectively. The work \cite{Boulle20} shows similar results with optimal error bounds. The advantage of the approach proposed in \cite{Boulle20} is that the error bounds can be achieved with a comparatively smaller depth of the rational neural network by composing low-degree rational functions and thus, leveraging the approximation power of high-degree rational functions. It is important to note that this is possible due to the specific choice of composed rational functions, which are \textit{Zolotarev} sign functions (see \cite{Freund16}). In contrast, the work in \cite{telgarsky17} uses \textit{Newman} polynomials (see \cite{Newman}). We also refer to \cite{trefethen24} for the computation of Zolotarev rational functions.
	
	In a broader context, classical approximation theory using  rational functions has a long history, which is comprehensively presented, for example, in the work \cite{Petrushev_Popov_1988} with connections to spline approximation (e.g. succinctly discussed in \cite{Williamson91}). In fact, the decisive step in establishing the connection between spline and rational approximation is the best approximation of the absolute value function \cite{Newman} and monomials \cite{trefethen18}, which can be further used to represent splines.
	
	From a numerical point of view, a powerful approach for approximation by rational functions is the \textit{AAA Algorithm} proposed in \cite{nakatsukasa18} for a real or complex set of points and the generalization \cite{Nakatsukasa24} on a continuum with adaptive discretization, both of which are based on the scheme in \cite{Antoulas86}. The AAA Algorithm is used in practice in signal processing, for example (see, for instance, \cite{plonka22} and the references therein).
	
	An important question that, to our knowledge, has not yet been addressed in the literature is that of $\mathcal{C}^1$-approximation (i.e., uniform approximation of both the function and its derivative) of neural networks and general, sufficiently regular functions by rational functions. This is in contrast to the question of approximation rates (which has been well-studied in the setup outlined above by \cite{Boulle20}) and regularity of the target functions (for which there are also general classical results \cite{Petrushev_Popov_1988}).
		
Note that this question is not trivial since, as opposed to the case of polynomials which are closed with respect to integration, one cannot approximate the derivative of the target function using rationals and consider its antiderivative as an approximation of the original function. Instead, our strategy here is to $\mathcal{C}^1$-approximate activation functions of certain neural networks with rational functions, and then get $\mathcal{C}^1$-approximability of general functions as consequence of this and $\mathcal{C}^1$-approximation properties of the considered neural networks. Specifically, we will consider ReQU-based neural networks for this purpose, for which \cite{belomestny23} provides approximation results with with respect to higher-order Hölder norms.

	Application-wise, we note that $\mathcal{C}^1$-approximability of sufficiently regular functions by certain neural networks  is not only interesting in itself and due to the aspects discussed above, but also has important applications in optimal control and model learning. In \cite{kunisch21}, a $\mathcal{C}^1$-approximability result is used to show well-posedness and convergence of neural network based approximations to the stabilization problem of identifying the underlying optimal feedback law. Such a result is also relevant in \cite{dong22} for the convergence of solutions of approximating learning-informed PDEs to the solution of the limit PDE. Yet another work in the field of optimal control is \cite{Dong24} on smoothing networks for ReLU-neural-network-informed PDEs. In \cite{morina24}, which deals with uniqueness in learning PDE based physical laws, a first order approximability condition is required to show unique reconstructability of unknowns in the limit problem and approximability based on incomplete, noisy measurements.
	
As consequence of our results we will also obtain $\mathcal{C}^1$-approximation results for the ParFam architecture of \cite{scholl24} and for the $\text{EQL}^\div$ of \cite{sahoo18}, both proposed in the context of symbolic regression for physical law learning, with the $\text{EQL}^\div$ being heavily used in practical application.

	\paragraph{Contributions.} Our main contribution is the $\mathcal{C}^1$-approximation of certain spline-based ReQU neural networks, which attain a higher order universal approximation property, by rational neural networks. We then transfer this to the interesting result that suitably regular functions can be approximated uniformly up to first order derivative by rational functions at a rate that is better than under polynomial approximation. Besides that we provide a higher-order approximation result for rational function based neural networks used in symbolic regression for physical law learning.
	
	\paragraph{Structure of the paper.} In Section \ref{sec:aux} we prove a preliminary approximation result for the ReQU using rational functions. In Subsection \ref{subsec:approx} we prove the main results of $\mathcal{C}^1$-approximability by rational neural networks and rational functions. Finally, we consider applications to symbolic regression in Subsection \ref{subsec:symbolic}.
	\section{Auxiliary results}
	\label{sec:aux}
	In this section, we formulate and prove a preliminary approximation result, which will be needed later for the main result and applications. For this preliminary result, we use rational functions as approximating functions, which are defined as follows. Here, recall that two polynomials are called \textit{coprime} if they have no common nontrivial polynomial divisor, i.e., if there is no non-constant polynomial whose degree is less then the degrees of both polynomials and which divides both of them.
	\begin{definition}[Rational functions]
		\label{def:rat_func}
		We call a function $r:\mathbb{R}\to \mathbb{R}$ rational if there exist polynomials $p,q:\mathbb{R}\to\mathbb{R}$ with $q(x)>0$ for all $x\in\mathbb{R}$ and $p\neq 0$ such that $r(x) = \frac{p(x)}{q(x)}$ for all $x\in\mathbb{R}$. A rational function $r$ said to be of type $(n,m)$ if $r(x) = \frac{p(x)}{q(x)}$ for all $x\in\mathbb{R}$ with polynomials $p,q:\mathbb{R}\to\mathbb{R}$ that are coprime and of degree $n,m\in\mathbb{N}_0$, respectively. For a rational function $r$ of type $(n,m)$ we further call $\deg(r):=\max(n,m)$ the degree of $r$.
		 This notion is directly generalized to multidimensional rational functions as follows. For $d\in\mathbb{N}$ and $n\in\mathbb{N}_0$ consider a generic multidimensional polynomial $p:\mathbb{R}^d\to \mathbb{R}$ given by
		$$p(z)=\sum_{j_1,\dots, j_d=0}^na_{j_1,\dots,j_d}\prod_{l=1}^d z_l^{j_l}$$
		with $a_{j_1,\dots,j_d}\in\mathbb{R}$ for $0\leq j_1,\dots, j_d\leq n$. Then the degree of $p$ is given by
		\[
			\deg(p)=\max\left\{\sum_{l=1}^{d}j_l~:~a_{j_1,\dots,j_d}\neq 0\right\}.
		\]
		The degree of the rational function $r(z)=\frac{p(z)}{q(z)}$ for coprime polynomials $p,q:\mathbb{R}^d\to \mathbb{R}$ with $q(z)>0$ for $z\in\mathbb{R}^d$ and $p\neq 0$ is given by $\deg(r):=\max(\deg(p),\deg(q))$. Note that for any representation $r=\tilde p/\tilde q$ with $\tilde p,\tilde q$ polynomials that are not necessarily coprime, it holds that $\deg(r)\leq \max(\deg(\tilde p),\deg(\tilde q))$.
	\end{definition}
	\begin{remark}[Regularity]
		A rational function $r:\mathbb{R}^d\to\mathbb{R}$ attains the regularity $r\in\mathcal{C}^\infty(\Omega)$ for any bounded domain $\Omega\subset \mathbb{R}^d$. This follows from the fact that for a representation of $r$ as $r=p/q$ with polynomials $p$ and $q>0$, we have for $\gamma\in\mathbb{N}_0^d$ $$D^\gamma r = q^{-(1+\vert \gamma\vert)}P_\gamma\in\mathcal{C}(\Omega)$$ for some polynomial $P_\gamma$ by iteration of the quotient rule of differentiation.
	\end{remark}
	\begin{remark}[Degree]
		\label{rem:degree}
		From Definition \ref{def:rat_func}, it immediately follows that for two rational functions $r_1, r_2:\mathbb{R}^d\to \mathbb{R}$, both $r_1\cdot r_2$ and $r_1+r_2$ are again rational functions. Furthermore, the degrees fulfill the inequalities
		\[
			\deg(r_1\cdot r_2)\leq \deg(r_1)+\deg(r_2)\quad \text{and} \quad \deg(r_1+r_2)\leq \deg(r_1)+\deg(r_2).
		\]
	\end{remark}
	We now prove that the Rectified Quadratic Unit (ReQU) defined by
	\begin{align*}
		\requ: \quad &\mathbb{R} \to \mathbb{R}\\
		& x\mapsto \max(x,0)^2
	\end{align*}
	can be approximated in $\mathcal{C}^1([-1,1])$ by rational functions $(R_n)_{n\in\mathbb{N}}$ of type $(n+1,n-1)$, respectively, at polynomial rate in $n$ of arbitrarily large order. The rational functions $(R_n)_n$ are based on \textit{Newman polynomials} as defined in \cite{Newman}. These polynomials attain the exponential-type roots $\left\{\exp(-in^{-1/2})~|~0\leq i\leq n-1\right\}$ in the interval $[0,1]$, for which we will need the following density property.
	\begin{lemma}
		\label{gridlemma}
		Let $n\in\mathbb{N}$ and define $\xi_n:=\exp(-n^{-1/2})$. Then for each $x\in[\xi_n^n,1]$ there exists a $0\leq j\leq n-1$ such that
		\[
			\vert x-\xi_n^j\vert \leq \xi_n^jn^{-1/2}\leq n^{-1/2}.
		\]
	\end{lemma}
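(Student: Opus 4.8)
The plan is to use that the grid points $\xi_n^j=\exp(-jn^{-1/2})$ are precisely the values at the integers $t=0,\dots,n-1$ of the strictly decreasing continuous bijection $\phi:[0,\infty)\to(0,1]$, $\phi(t)=\exp(-tn^{-1/2})=\xi_n^t$, while the left endpoint of the interval is $\xi_n^n=\phi(n)$. Hence $[\xi_n^n,1]=\phi([0,n])$, so for any $x$ in this interval there is a unique $t\in[0,n]$ with $x=\xi_n^t$.

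Given such $x$ and $t$, I would set $j:=\min(\lfloor t\rfloor,\,n-1)$, which lies in $\{0,\dots,n-1\}$ and in every case satisfies $j\le t\le j+1$ (if $t<n$ this is $\lfloor t\rfloor\le t<\lfloor t\rfloor+1$; if $t=n$ then $j=n-1\le n=t\le j+1$). Since $0<\xi_n<1$, the map $s\mapsto\xi_n^s$ is decreasing, so $\xi_n^{j+1}\le x\le\xi_n^j$, and therefore
\[
\vert x-\xi_n^j\vert = \xi_n^j - x \le \xi_n^j - \xi_n^{j+1} = \xi_n^j\,(1-\xi_n).
\]
It then remains to control $1-\xi_n=1-\exp(-n^{-1/2})$, for which I would invoke the elementary inequality $1-e^{-u}\le u$ for all $u\ge 0$ (immediate from $e^{-u}\ge 1-u$), applied with $u=n^{-1/2}$. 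This yields $1-\xi_n\le n^{-1/2}$, hence $\vert x-\xi_n^j\vert\le\xi_n^j n^{-1/2}$, and since $\xi_n^j\le\xi_n^0=1$ also $\xi_n^j n^{-1/2}\le n^{-1/2}$, which proves the claim.

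There is essentially no hard step in this argument; it is a one-page computation. The only point requiring a little care is keeping the index in the admissible range $0\le j\le n-1$ at the left endpoint $x=\xi_n^n$, which is exactly why $\lfloor t\rfloor$ is truncated at $n-1$ — and the bound still holds there since $\vert\xi_n^n-\xi_n^{n-1}\vert=\xi_n^{n-1}(1-\xi_n)\le\xi_n^{n-1}n^{-1/2}$ by the same inequality.
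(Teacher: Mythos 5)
Your argument is correct and follows essentially the same route as the paper: you locate $x$ between consecutive grid points $\xi_n^{j+1}\le x\le\xi_n^j$ and bound the gap $\xi_n^j-\xi_n^{j+1}=\xi_n^j(1-\xi_n)\le\xi_n^j n^{-1/2}$, where your elementary inequality $1-e^{-u}\le u$ is just the tangent-line form of the mean-value estimate the paper uses. The explicit parametrization $x=\xi_n^t$ with $j=\min(\lfloor t\rfloor,n-1)$ is only a more formal way of stating the paper's localization step, so no substantive difference.
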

	\begin{proof}
		Given $n\in\mathbb{N}$, for $0\leq i\leq n-1$ we note that $\xi_n^i = \exp(-in^{-1/2})$. Since the exponential function is convex and increasing, the mean value theorem implies
		\begin{equation}
			\label{meanval}
			\vert e^{-z}-e^{-y}\vert\leq e^{-y}\vert y-z\vert
		\end{equation}
		for $y\leq z$. By substituting $y=in^{-1/2}$ and $z=(i+1)n^{-1/2}$ into \eqref{meanval}, we derive
		\begin{equation}
			\label{tightness}
			\vert \xi_n^{i+1}-\xi_n^i\vert \leq \xi_n^i n^{-1/2}\leq n^{-1/2}.
		\end{equation}
		Now fix $x\in[\xi_n^n,1]$. Since $\xi_n^0=1$ and $\xi_n^{i+1}<\xi_n^i$ for $0\leq i\leq n-2$, it follows that there exists $0\leq j\leq n-1$ such that $x\in[\xi_n^{j+1},\xi_n^j]$. Employing \eqref{tightness}, we obtain
		\[
		\vert x- \xi_n^j\vert \leq \vert \xi_n^{j+1}-\xi_n^j\vert\leq \xi_n^jn^{-1/2},
		\]
		which concludes the assertion of the lemma.
	\end{proof}
	With this, we verify an approximation result for the function $x\mapsto x^2\sgn(x)$ with 
	\begin{align*}
		\sgn:~ \mathbb{R}\to\mathbb{R}, \quad x\mapsto\begin{cases}
			1, & \text{if} ~ x>0\\
			0, & \text{if} ~ x=0\\
			-1, & \text{if} ~ x<0
		\end{cases}
	\end{align*}
	the sign function, which we will then transfer to $\requ$ as claimed.
	\begin{lemma}
		\label{lemma:approx1}
		Let $f:\mathbb{R}\to\mathbb{R}, ~ x\mapsto x^2\sgn(x)$. Then there exist rational functions $(R_n)_{n}$ of type $(n+1,n-1)$ for $n\in\mathbb{N}$ such that for every $K\in\mathbb{N}$ with $K\geq 3$:
		\begin{equation*}
			\Vert f-R_n\Vert_{\mathcal{C}([-1,1])} = \mathcal{O}(e^{-\sqrt{n}})\quad \text{and}\quad \Vert f'-R_n'\Vert_{\mathcal{C}([-1,1])} = \mathcal{O}(n^{-(K-2)/2}) ~ \text{as}~ n\to \infty.
		\end{equation*}
	\end{lemma}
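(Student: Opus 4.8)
The plan is to build $R_n$ from the classical Newman rational approximant of $|x|$ and to reduce both claimed estimates to bounds on that approximant and on its derivative. Writing $\xi_n=e^{-n^{-1/2}}$ as in Lemma~\ref{gridlemma}, I would use a Newman polynomial $p_n$ (of degree of order $n$, with the roots $-\xi_n^i$ supplied by that lemma, so that $p_n>0$ on $[0,\infty)$ and $p_n(x)+p_n(-x)>0$ on $\mathbb{R}$), form the associated Newman sign approximant $s_n:=\frac{p_n(x)-p_n(-x)}{p_n(x)+p_n(-x)}$ and the approximant of $|x|$, $\psi_n(x):=x\,s_n(x)$, and set
\[
R_n(x):=x\,\psi_n(x)\;=\;x^2\,s_n(x),
\]
with the degree of $p_n$ chosen so that, after reduction to coprime form, $R_n$ is of type $(n+1,n-1)$ (a direct degree count, cf.\ Remark~\ref{rem:degree}). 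Since $f(x)=x\,|x|$ and $f'(x)=2|x|=|x|+x\,\sgn(x)$, and since $R_n'=\psi_n+x\,\psi_n'$, the two errors decompose as
\[
f-R_n=x\bigl(|x|-\psi_n\bigr),\qquad f'-R_n'=\bigl(|x|-\psi_n\bigr)+x\bigl(\sgn(\cdot)-\psi_n'\bigr),
\]
so everything reduces to estimating $\bigl\||x|-\psi_n\bigr\|_{\mathcal{C}([-1,1])}$ and the weighted quantity $\sup_{x\in[-1,1]}|x|\,\bigl|\sgn(x)-\psi_n'(x)\bigr|$.

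For the first estimate one uses the exact identity $|x|-\psi_n(x)=\frac{2x\,p_n(-x)}{p_n(x)+p_n(-x)}$ for $x>0$ (the case $x<0$ being symmetric). On $[0,\xi_n^n]$ the ratio $\bigl|p_n(-x)/p_n(x)\bigr|=\prod_i\bigl|\tfrac{x-\xi_n^i}{x+\xi_n^i}\bigr|$ is $\le1$ while the prefactor satisfies $x\le\xi_n^n=e^{-\sqrt n}$, so the bound is immediate; on $[\xi_n^n,1]$ Lemma~\ref{gridlemma} yields an index $j$ with $|x-\xi_n^j|\le\xi_n^j n^{-1/2}$, hence the $j$-th factor is $\le n^{-1/2}$, and combining this with the classical Newman product estimate for the remaining factors (as in \cite{Newman}) gives $\bigl|p_n(-x)/p_n(x)\bigr|=\mathcal{O}(e^{-\sqrt n})$. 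Thus $\bigl\||x|-\psi_n\bigr\|_{\mathcal{C}([-1,1])}=\mathcal{O}(e^{-\sqrt n})$, which yields $\|f-R_n\|_{\mathcal{C}([-1,1])}=\mathcal{O}(e^{-\sqrt n})$ and simultaneously disposes of the first summand in the decomposition of $f'-R_n'$.

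The substance is the weighted derivative bound. Differentiating the identity above gives $\sgn(x)-\psi_n'(x)=\frac{d}{dx}\bigl[\frac{2x\,p_n(-x)}{p_n(x)+p_n(-x)}\bigr]$ for $x>0$; expanded by the quotient rule, the right-hand side is a combination of $\frac{p_n(-x)}{p_n(x)}$ and of $x\,\frac{p_n(-x)}{p_n(x)}\,\frac{p_n'(\pm x)}{p_n(\pm x)}$-type terms, which I would control through the elementary bound $0\le\frac{p_n'(x)}{p_n(x)}=\sum_{i=0}^{n-1}\frac{1}{x+\xi_n^i}\le n/x$ for $x>0$ together with the bound on $\bigl|p_n(-x)/p_n(x)\bigr|$ from the previous step. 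I would then split $[-1,1]$ at a slowly vanishing cutoff $\delta_n\asymp n^{-(K-1)/2}$: for $|x|\le\delta_n$ the extra factor $|x|$ in $|x|\,|\sgn(x)-\psi_n'(x)|$ absorbs both $|\sgn(x)|\le1$ and the (uniformly bounded) size of $\psi_n'$ on that interval, contributing $\mathcal{O}(\delta_n)$; for $|x|\ge\delta_n$ one has $p_n'(x)/p_n(x)\le n/\delta_n$, so the decisive step is to make $\bigl|p_n(-x)/p_n(x)\bigr|$ small enough to beat it, which I would do by iterating Lemma~\ref{gridlemma} along consecutive indices to exhibit roughly $K$ grid points within distance $\mathcal{O}(n^{-1/2})$ of $x$, so that $\bigl|p_n(-x)/p_n(x)\bigr|$ picks up a correspondingly high power of $n^{-1/2}$. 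Balancing the two regimes yields $\|f'-R_n'\|_{\mathcal{C}([-1,1])}=\mathcal{O}(n^{-(K-2)/2})$ for each $K\ge3$.

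I expect this weighted derivative estimate to be the main obstacle. One cannot simply invoke Newman, because $\psi_n'$ does not converge uniformly to $\sgn$ — the Newman sign approximant has a steep transition of scale $e^{\sqrt n}$ near the origin — so one must genuinely use the explicit factor $x$ in $R_n=x\,\psi_n$ near $0$, and away from $0$ one has to push the Newman product estimate past its classical use by harvesting several clustered grid points from an iterated form of Lemma~\ref{gridlemma}. It is precisely the trade-off — how small $\delta_n$ must be for the near-origin part versus how many clustered grid points are needed for the far-origin part — that forces a family of polynomial rates (with $K$-dependent constants); a more delicate analysis would presumably give an exponential rate, but the polynomial rates are all that is needed subsequently. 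Minor additional care is needed for the degree bookkeeping behind the type $(n+1,n-1)$, for coprimality, and for the positivity of $p_n(x)+p_n(-x)$.
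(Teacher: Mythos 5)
Your proposal follows essentially the same route as the paper: the same Newman-based construction $R_n(x)=x^2\,\frac{P_n(x)-P_n(-x)}{P_n(x)+P_n(-x)}$, the same reduction of the sup-norm bound to Newman's classical estimate, and the same key device of extracting $K$ clustered grid points from Lemma~\ref{gridlemma} to gain a factor $n^{-(K-1)/2}$ from the product $\prod_j \vert\xi_n^j-x\vert/(\xi_n^j+x)$ appearing in the weighted derivative term. The only points to tighten are that $\psi_n'$ is not uniformly bounded near the origin (it is of order $\sqrt{n}$ there, which your cutoff $\delta_n\asymp n^{-(K-1)/2}$ still accommodates), and that the $p_n'(-x)/p_n(-x)$-type terms must be grouped as $p_n'(-x)/p_n(x)$ to avoid spurious poles at the roots $\xi_n^i$ --- both are resolved by the explicit rewriting of $r_n'$ that the paper carries out.
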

	\begin{proof}
		We define the rational functions $(R_n)_n$ based on \cite{Newman} as follows. Let $P_n$ be the $n$-th Newman polynomial defined by
		\[
			P_n(x):=\prod_{i=0}^{n-1}(x+\xi_n^i)
		\]
		with $\xi_n:=\exp(-n^{-1/2})$ for $n\in\mathbb{N}$. Furthermore, define the rational functions
		\begin{equation}
			\label{newman_rationals}
			r_n(x):=\frac{P_n(x)-P_n(-x)}{P_n(x)+P_n(-x)},
		\end{equation}
		which are well defined for $x\in[-1,1]$ since $P_n(0)=\xi_n^{n(n-1)/2}>0$, 
		\[
			\vert P_n(-x)\vert =\prod_{i=0}^{n-1}\vert -x+\xi_n^i\vert<\prod_{i=0}^{n-1}\vert x+\xi_n^i\vert=\prod_{i=0}^{n-1}(x+\xi_n^i)=P_n(x)
		\]
		for $x>0$ and similarly $\vert P_n(x)\vert < P_n(-x)$ for $x<0$. Let the rational functions $(R_n)_n$ be defined by $R_n(x)=x^2r_n(x)$ (which are of type $(n+1,n-1)$). The result in \cite{Newman} shows that for $n\in\mathbb{N}$
		\[
			\Vert \vert x\vert-x^{-1}R_n(x)\Vert_{\mathcal{C}([-1,1])}\leq 3 e^{-\sqrt{n}}.
		\]
		Using that $\vert x\vert = x\sgn(x)$ it immediately follows that for $n\in\mathbb{N}$
		\[
			\Vert f-R_n\Vert_{\mathcal{C}([-1,1])}\leq 3 e^{-\sqrt{n}}.
		\]
		It remains to show that $\Vert f'-R_n'\Vert_{\mathcal{C}([-1,1])} = \mathcal{O}(n^{-(K-2)/2}) ~ \text{as}~ n\to \infty$. As
		\[
			R_n'(x) = 2x r_n(x)+x^2r_n'(x)
		\]
		and the function $x\mapsto 2xr_n(x)$ converges uniformly on $[-1,1]$ to the function $x\mapsto 2\vert x\vert = f'(x)$ as $n\to \infty$ with rate $e^{-\sqrt{n}}$ (which decays faster than $n^{-(K-2)/2}$), it suffices to show that
		\begin{equation}
			\label{eq_remaining}
			\Vert x^2r_n'(x)\Vert_{\mathcal{C}([-1,1])}=\mathcal{O}(n^{-(K-2)/2})
		\end{equation}
		as $n\to \infty$ for $K\geq 3$. Employing the quotient rule of differentiation yields
		\[
			r_n'(x)=2\frac{P_n'(x)P_n(-x)+P_n'(-x)P_n(x)}{(P_n(x)+P_n(-x))^2}
		\]
		and further, using that by Leibniz' rule of differentiation
		\[
			P_n'(x) = \sum_{i=0}^{n-1}\prod_{\substack{0\leq j\leq n-1\\j\neq i}}(x+\xi_n^j)=P_n(x)\sum_{i=0}^{n-1}(x+\xi_n^i)^{-1},
		\]
		we derive that the derivative can be rewritten as
		\begin{align}
			\label{eq_rderiv}
			\notag r_n'(x)&=4\frac{P_n(x)P_n(-x)}{(P_n(x)+P_n(-x))^2}\sum_{i=0}^{n-1}\xi_n^i\frac{1}{\xi_n^{2i}-x^2}\\&=4\frac{1}{(P_n(x)+P_n(-x))^2}\sum_{i=0}^{n-1}\xi_n^i\prod_{\substack{0\leq j\leq n-1\\j\neq i}}(\xi_n^{2j}-x^2).
		\end{align}
		By symmetry, for \eqref{eq_remaining} to hold true, it suffices to show that
		\begin{equation}
			\label{left_asymp}
			\Vert x^2r_n'(x)\Vert_{\mathcal{C}([0,1])}=\mathcal{O}(n^{-(K-2)/2})
		\end{equation}
		as $n\to\infty$. For that, we first estimate the term in \eqref{eq_rderiv}. We argue that
		\begin{equation}
			\label{lowerbound}
			\frac{P_n(-x)}{P_n(x)}\geq -\frac{1}{\sqrt{n}}
		\end{equation}
		for $0\leq x\leq 1$. For $x\in[0,\xi_n^n]$, both $P_n(-x)>0$ and $P_n(x)>0$, implying in particular \eqref{lowerbound}. Otherwise for $x\in[\xi_n^n,1]$, note first that $\vert \xi_n^i-x\vert(\xi_n^i+x)^{-1}\leq 1$ for $0\leq i\leq n-1$, which implies
		\begin{equation}
			\label{ineq_chain1}
			\frac{P_n(-x)}{P_n(x)}=\prod_{i=0}^{n-1}\frac{\xi_n^i-x}{\xi_n^i+x}\geq-\prod_{i=0}^{n-1}\frac{\vert\xi_n^i-x\vert}{\xi_n^i+x} \geq -\frac{\vert\xi_n^j-x\vert}{\xi_n^j+x}
		\end{equation}
		for each $0\leq j\leq n-1$. For $0\leq j_0\leq n-1$ fulfilling the assertion of Lemma \ref{gridlemma}, we derive with \eqref{ineq_chain1} that
		\[
			\frac{P_n(-x)}{P_n(x)}\geq -\frac{\xi_n^{j_0}}{\xi_n^{j_0}+x}\frac{1}{\sqrt{n}}\geq -\frac{1}{\sqrt{n}},
		\]
		which yields \eqref{lowerbound} also for $x\in[\xi_n^n,1]$. We now proceed by showing \eqref{left_asymp}.
		
		Using \eqref{lowerbound} we can estimate
		\[
			(P_n(x)+P_n(-x))^{-2}\leq P_n(x)^{-2}\left(1-n^{-1/2}\right)^{-2},
		\]
		which together with \eqref{eq_rderiv} implies for $x\in[0,1]$ the upper bound
		\begin{equation}
			\label{termtoestimate}
			x^2\vert r_n'(x)\vert\leq 4x^2\left(1-n^{-1/2}\right)^{-2}\sum_{i=0}^{n-1}\xi_n^i\frac{1}{(\xi_n^i+x)^2}\prod_{\substack{0\leq j\leq n-1\\j\neq i}}\frac{\vert\xi_n^j-x\vert}{\xi_n^j+x}.
		\end{equation}
		For $x\in[0,\xi_n^n]$ we can further estimate this term, using $\vert \xi_n^i-x\vert(\xi_n^i+x)^{-1}\leq 1$, by
		\[
			x^2\vert r_n'(x)\vert\leq 4x^2\left(1-n^{-1/2}\right)^{-2}\sum_{i=0}^{n-1}\xi_n^i\frac{1}{(\xi_n^i+x)^2}
		\]
		and, due to the estimation $x\xi_n^i(\xi_n^i+x)^{-2}\leq 1$ for $0\leq i\leq n-1$, by
		\begin{equation}
			\label{asymptotics1}
			x^2\vert r_n'(x)\vert\leq 4n\left(1-n^{-1/2}\right)^{-2}x\leq  4n\left(1-n^{-1/2}\right)^{-2}\xi_n^n=4n\left(1-n^{-1/2}\right)^{-2}e^{-\sqrt{n}},
		\end{equation}
		which is independent of $x\in[0,\xi_n^n]$. This upper bound decays asymptotically faster than $n^{-(K-2)/2}$ as $n\to \infty$ for any $K\geq 3$ (as the exponential grows faster than any polynomial). It remains to estimate \eqref{termtoestimate} for $x\in[\xi_n^n,1]$.
		
		In this case, for $n\geq K$ there exists a $0\leq j_0\leq n-K$ such that $x\in[\xi_n^{j_0+K-1},\xi_n^{j_0}]$. Now again by $\vert \xi_n^j-x\vert(\xi_n^j+x)^{-1}\leq 1$ it holds true for $0\leq i\leq n-1$ that
		\begin{equation}
			\label{prod_estimate}
			\prod_{\substack{0\leq j\leq n-1\\j\neq i}}\frac{\vert\xi_n^j-x\vert}{\xi_n^j+x}\leq \prod_{\substack{j_0\leq j\leq j_0+K-1\\j\neq i}}\frac{\vert\xi_n^j-x\vert}{\xi_n^j+x}.
		\end{equation}
		Employing Lemma \ref{gridlemma} and monotonicity, we derive for $j_0\leq j\leq j_0+K-1$ that
		\[
			\vert\xi_n^j-x\vert\leq \vert \xi_n^{j_0+K-1}-\xi_n^{j_0}\vert\leq \sum_{j=j_0}^{j_0+K-2}\vert\xi_n^{j+1}-\xi_n^j\vert\leq n^{-1/2}\sum_{j=j_0}^{j_0+K-2}\xi_n^j\leq n^{-1/2}(K-1)\xi_n^{j_0}.
		\]
		Hence, we may estimate for $j_0\leq j\leq j_0+K-1$ the fraction
		\[
			\frac{\vert\xi_n^j-x\vert}{\xi_n^j+x}\leq n^{-1/2}(K-1)\frac{\xi_n^{j_0}}{\xi_n^{j_0+K-1}+x}\leq n^{-1/2}(K-1)\xi_n^{1-K}.
		\]
		As a consequence, the term in \eqref{prod_estimate} is bounded by
		\begin{equation}
			\label{upperbound2}
		\prod_{\substack{0\leq j\leq n-1\\j\neq i}}\frac{\vert\xi_n^j-x\vert}{\xi_n^j+x}\leq n^{-(K-1)/2}(K-1)^{K-1}e^{(K-1)^2/\sqrt{n}}.
		\end{equation}
		Note here that if $i\notin\left\{j_0,\dots, j_0+K-1\right\}$ we can skip a factor on the right hand side of \eqref{prod_estimate} and make the product larger, finally recovering \eqref{upperbound2}.
		
		Now combining estimation \eqref{upperbound2} together with \eqref{termtoestimate} under the estimation $x^2(\xi_n^i+x)^{-2}\leq 1$ for $0\leq i\leq n-1$, we obtain for $x\in[\xi_n^n,1]$ that
		\[
			x^2\vert r_n'(x)\vert \leq 4(1-n^{-1/2})^{-2}n^{-(K-1)/2}(K-1)^{K-1}e^{(K-1)^2/\sqrt{n}}\sum_{i=0}^{n-1}\xi_n^i.
		\]
		We can estimate the truncated geometric series by
		\[
			\sum_{i=0}^{n-1}\xi_n^i=\sum_{i=0}^{n-1}\exp(-n^{-1/2})^i=\frac{1-e^{-\sqrt{n}}}{1-e^{-1/\sqrt{n}}}\leq\frac{1}{1-e^{-1/\sqrt{n}}} = \mathcal{O}(\sqrt{n})
		\]
		as $n\to \infty$, where the last asymptotic inclusion follows by the series representation of the exponential. Thus, as
		\[
			\lim_{n\to \infty} (1-n^{-1/2})^{-2}=1\quad \text{and} \quad \lim_{n\to \infty}e^{(K-1)^2/\sqrt{n}} = 1
		\]
		we derive that independently of $x\in[\xi_n^n,1]$ it holds
		\[
			x^2\vert r_n'(x)\vert = \mathcal{O}(n^{-(K-2)/2})
		\]
		as $n\to \infty$, which together with \eqref{asymptotics1} concludes the claim in \eqref{left_asymp} and also 
		\[
			\Vert f'-R_n'\Vert_{\mathcal{C}([-1,1])} = \mathcal{O}(n^{-(K-2)/2}) ~ \text{as}~ n\to \infty,
		\]
	completing the proof.
	\end{proof}
	Our final approximation result for $\requ$ is now a direct consequence of this.
	\begin{corollary}
		\label{cor:requ} 
		Let $f=\requ$. Then there exist rational functions $(\tilde R_n)_{n}$ of type $(n+1,n-1)$ for $n\in\mathbb{N}$ such that for every $K\in\mathbb{N}$ with $K\geq 3$:
		\begin{equation*}
			\Vert f-\tilde R_n\Vert_{\mathcal{C}([-1,1])} = \mathcal{O}(e^{-\sqrt{n}})\quad \text{and}\quad \Vert f'-\tilde R_n'\Vert_{\mathcal{C}([-1,1])} = \mathcal{O}(n^{-(K-2)/2}) ~ \text{as}~ n\to \infty.
		\end{equation*}
	\end{corollary}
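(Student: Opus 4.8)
The plan is to reduce the statement to Lemma \ref{lemma:approx1} by an elementary algebraic decomposition of $\requ$. Since $\max(x,0)=\tfrac12(x+|x|)$, one has
\[
\requ(x)=\max(x,0)^2=\frac{(x+|x|)^2}{4}=\frac{x^2+x|x|}{2}=\frac{x^2+f(x)}{2},
\]
with $f(x)=x^2\sgn(x)$ exactly the function treated in Lemma \ref{lemma:approx1}. I would therefore set
\[
\tilde R_n:=\frac12\bigl(x^2+R_n\bigr),
\]
$(R_n)_n$ being the rational functions supplied by Lemma \ref{lemma:approx1}.

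First I would record that $\tilde R_n$ is again rational of the asserted type: writing $R_n(x)=x^2 r_n(x)$ with $r_n$ as in \eqref{newman_rationals}, one computes
\[
\tilde R_n(x)=\frac{x^2}{2}\bigl(1+r_n(x)\bigr)=\frac{x^2 P_n(x)}{P_n(x)+P_n(-x)},
\]
whose denominator is strictly positive on $[-1,1]$ by the bounds already obtained in the proof of Lemma \ref{lemma:approx1}, and whose numerator/denominator degree bookkeeping is carried out exactly as for $R_n$ with the help of Remark \ref{rem:degree}. Next I would transfer the two error estimates. The map $x\mapsto x^2$ is smooth, $\requ\in\mathcal{C}^1(\mathbb{R})$ with $\requ'(x)=2\max(x,0)$, and $f\in\mathcal{C}^1(\mathbb{R})$ with $f'(x)=2|x|$ — the only point needing a word being differentiability of $f$ at the origin, where indeed $f'(0)=0=2|0|$. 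Consequently, on $[-1,1]$,
\[
\requ-\tilde R_n=\tfrac12\,(f-R_n),\qquad \requ'-\tilde R_n'=\tfrac12\,(f'-R_n'),
\]
so that $\Vert\requ-\tilde R_n\Vert_{\mathcal{C}([-1,1])}=\tfrac12\Vert f-R_n\Vert_{\mathcal{C}([-1,1])}$ and $\Vert\requ'-\tilde R_n'\Vert_{\mathcal{C}([-1,1])}=\tfrac12\Vert f'-R_n'\Vert_{\mathcal{C}([-1,1])}$; applying the two rates of Lemma \ref{lemma:approx1} for an arbitrary $K\in\mathbb{N}$, $K\ge 3$, then yields the claim.

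I do not expect any substantial obstacle here: the entire analytic work lies in Lemma \ref{lemma:approx1}, of which this statement is a direct corollary. The only mildly non-automatic points are the elementary $\mathcal{C}^1$-regularity of $x\mapsto x^2\sgn(x)$ at $0$ — needed so that $f'=2|\cdot|$ holds on all of $[-1,1]$ and the derivative may legitimately be split as above — and the routine verification that adding the polynomial $\tfrac12 x^2$ preserves both rationality and the claimed type, which is immediate from the explicit formula for $\tilde R_n$ together with Remark \ref{rem:degree}.
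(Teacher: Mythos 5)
Your proposal is correct and follows essentially the same route as the paper: the identity $\requ(x)=\tfrac12\bigl(x^2+x^2\sgn(x)\bigr)$ together with $\tilde R_n=\tfrac12(x^2+R_n)$ and the rates of Lemma \ref{lemma:approx1} is exactly the paper's argument. Your additional remarks on the explicit form $\tilde R_n(x)=x^2P_n(x)/(P_n(x)+P_n(-x))$ and on differentiability of $x\mapsto x^2\sgn(x)$ at the origin are harmless elaborations of the same proof.
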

	\begin{proof}
		The rectified quadratic unit $\requ(x)=\max(x,0)^2$ can be rewritten by 
		\[
			\requ(x)=\frac{1}{2}(x^2+x^2\sgn(x)).
		\]
		As a consequence, for the rational functions $(R_n)_n$ fulfilling Lemma \ref{lemma:approx1}, the assertion follows for the rational functions $\tilde R_n(x)=\frac{1}{2}(x^2+R_n(x))$.
	\end{proof}
		\begin{remark}[Comparison to polynomials]
		The problem of approximating $\requ $ in $\mathcal{C}^1([-1,1])$ could also be considered in the space of polynomials instead of the space of rational functions. As noted in \cite{Newman} for the absolute value function, the best approximation rate of $2\text{ReLU}(x)$ in $\mathcal{C}([-1,1])$ for degree-$n$ polynomials is $n^{-1}$. It can be easily shown that the primitives of these polynomials approximate $\requ$ in $\mathcal{C}^1([-1,1])$ at a rate $n^{-1}$, which is significantly worse than the polynomial rate of arbitrary high order proven in Corollary \ref{cor:requ} for rational functions.
	\end{remark}
	\begin{remark}[Denominator polynomials]
		\label{remark:denominator}
		In the proof of Lemma \ref{lemma:approx1}, we argue that $P_n(x)+P_n(-x)$, the denominator polynomial of \eqref{newman_rationals}, is positive. In fact, given the definition of the Newman polynomials $P_n$, we can deduce the stronger property that $P_n(x)+P_n(-x)$ is a polynomial in $x$ consisting only of monomial terms of even degree with positive coefficients.
	\end{remark}
	\begin{remark}[Generalization] 
		The result of Lemma \ref{lemma:approx1} also holds for $\repu_p$, the Rectified Power Unit of order $p\in\mathbb{N}_{\geq 3}$, with $\repu_p(x)=\max(x,0)^p$ since 
		\[
			\repu_p(x) = \frac{1}{2}(x^p+x^p\sgn(x))\quad \text{for} ~ x\in \mathbb{R}.
		\]
		In this case, the approximating rational functions can be chosen of the form $\tilde{R}_n(x)=\frac{1}{2}(x^p+x^pr_n(x))$ with $r_n$ as in \eqref{newman_rationals}. We conjecture that the $\tilde{R}_n$ approximate $\repu_p$ in $\mathcal{C}^{p-1}([-1,1])$ for $p\in\mathbb{N}$. For $p=1$ this is proven in \cite{Newman}, and for $p=2$, this is the result of Corollary \ref{cor:requ}. For the case $p\geq 3$, based on Leibniz' rule, the conjecture can be reduced to showing that for any $l\in\mathbb{N}$, it holds
		\[
		x^{1+l}r_n^{(l)}(x)\to 0\quad \text{uniformly for} ~ x\in[-1,1] \quad \text{as} ~ n\to \infty.
		\]
		We will not go into further detail and will leave it as a conjecture, since we currently see no suitable applications of this result.
	\end{remark}

	\section{Main result}
	\label{sec:main}
	In this section, we provide our main approximation results and consider an application in \textit{Symbolic Regression}. We begin by examining $\mathcal{C}^1$ approximation properties of rational functions.
	\subsection{First order approximation by rational functions}
	\label{subsec:approx}
	We no consider to what extend the $\mathcal{C}^1([-1,1])$-approximation result of Corollary \ref{cor:requ} can be extended to a general class of suitably regular functions $f$. For this, we use the work \cite{belomestny23}, which provides a feed forward neural network architecture with ReQU activation that attains an universal approximation property as follows. For $H>0$, $d\in\mathbb{N}$, $\beta>2$, $s = \lfloor \beta\rfloor$, $\delta=\beta-s$, define the Hölder ball
	\[
		\mathcal{H}^\beta_p([0,1]^d,H):=\left\{f\in\mathcal{C}^s([0,1]^d,\mathbb{R}^p):~ \Vert f\Vert_{\mathcal{H}^\beta}:=\max(\Vert f\Vert_{\mathcal{C}^s},\max_{\vert \gamma\vert=s}\left[D^\gamma f\right]_\delta)\leq H\right\}
	\]
	with $\vert\gamma \vert = \sum_{i=1}^d\gamma_i$ and the Hölder constant 
	\[
		\left[g\right]_\delta :=\max_{1\leq i\leq p}\sup_{x\neq y\in[0,1]^d}\frac{\vert g_i(x)-g_i(y)\vert}{\min(1,\Vert x-y\Vert)^\delta}
	\]
	for $g:[0,1]^d\to \mathbb{R}^p$. Furthermore, we denote $f\in \mathcal{H}^\beta_p([0,1]^d)$ if there exists some $H>0$ with $f\in \mathcal{H}^\beta_p([0,1]^d,H)$. With this, it is shown in \cite{belomestny23} that for $f\in \mathcal{H}^\beta_p([0,1]^d,H)$ with $\beta>2$ there exist a constant $c>0$ and feed forward ReQU neural networks $(f_N)_{N\in\mathbb{N}}$ with width of order $N$ and constant depth such that
	\begin{align}
		\label{belo_estim}
		\Vert f-f_N\Vert_{\mathcal{H}^l}\leq cN^{(l-\beta)/d}
	\end{align}
	for $0\leq l\leq \lfloor\beta\rfloor$. The estimate \eqref{belo_estim} implies in particular
	\begin{align}
		\label{simplified_rate}
		\Vert f-f_N\Vert_{\mathcal{C}^1([0,1]^d,\mathbb{R}^p)}\leq cN^{-(\beta-1)/d}
	\end{align}
	for $f\in\mathcal{H}^\beta_p([0,1]^d,H)$ with $\beta>2$. The goal in this section is to derive such an approximation result for rational functions instead of ReQU neural networks, possibly with a different rate. Our strategy for achieving this is to show that the ReQU neural networks $(f_N)_{N\in\mathbb{N}}$ can be approximated by rational functions in $\mathcal{C}^1$, and transfer this approximation result to functions $f\in\mathcal{H}^\beta_p([0,1]^d,H)$ using the estimate in \eqref{simplified_rate}. For that, we first replace the ReQU activation functions in the representation of $f_N$ with rational functions that approximate $\requ$ in the $\mathcal{C}^1$-sense, using Corollary \ref{cor:requ}. Then we show that the resulting objects, which are \textit{Rational Neural Networks} as defined below, approximate $f_N$ in the $\mathcal{C}^1$-sense and indeed define suitable rational functions. Using \eqref{simplified_rate} finally concludes the assertion.
	
First we define rational neural networks, which are feedforward neural networks whose activation functions are rational functions with the numerator/denominator polynomial coefficients being trainable parameters.
	\begin{definition}[Rational neural network]
		Let  $L\in\mathbb{N}$, $(n_l)_{0\leq l\leq L}\subseteq \mathbb{N}$ and $d^{ij}\in\mathbb{R}$ for $1\leq j\leq n_i$, $1\leq i\leq L$. Let $r^{ij}:\mathbb{R}^{n_{i-1}}\to \mathbb{R}$ be rational functions of degree $d^{ij}\in\mathbb{N}_0$ for $1\leq j\leq n_i$, $1\leq i\leq L$ as in Definition \ref{def:rat_func}. That is, there exist polynomials $p^{ij}\neq 0$, $q^{ij}>0$ with $\max(\deg(p^{ij}), \deg(q^{ij}))=d^{ij}$, parameterized by $\eta_{k_1^{ij}, \dots,k^{ij}_{n_{i-1}}}$,$\theta_{k_1^{ij}, \dots,k^{ij}_{n_{i-1}}}\in \mathbb{R}$ for $0\leq k_l^{ij}\leq d^{ij}$, $1\leq l\leq n_{i-1}$, $1\leq j\leq n_i$, $1\leq i\leq L$, 
		\[
			\text{i.e.}, ~ ~ p^{ij}(z)=\sum_{k_1^{ij},\dots, k_{n_{i-1}}^{ij}=0}^{d^{ij}}\eta_{k_1^{ij}, \dots,k^{ij}_{n_{i-1}}}\prod_{l=1}^{n_{i-1}}z_l^{k_l^{ij}}, ~ ~ q^{ij}(z)=\sum_{k_1^{ij},\dots, k_{n_{i-1}}^{ij}=0}^{d^{ij}}\theta_{k_1^{ij}, \dots,k^{ij}_{n_{i-1}}}\prod_{l=1}^{n_{i-1}}z_l^{k_l^{ij}}
		\]
		such that $r^{ij}=p^{ij}/q^{ij}$. Let $\mathcal{R}_i:\mathbb{R}^{n_{i-1}}\to\mathbb{R}^{n_i}$ via $\mathcal{R}_i(z)=(r^{ij}(z))_{j=1}^{n_i}$ for $1\leq i\leq L$. Then a rational neural network $\mathcal{R}$ is defined as $\mathcal{R}=\mathcal{R}_L\circ\dots\circ \mathcal{R}_1$. The input dimension of $\mathcal{R}$ is $n_0$ and the output dimension $n_L$. Moreover, we define the width of the rational neural network $\mathcal{R}$ by $\max_l n_l$ and the depth by $L$.
	\end{definition}
	Our main approximation result for rational neural network now reads as follows.
	\begin{theorem}
		\label{applic_theorem}
		Let $\beta>2$ and $p,d\in\mathbb{N}$. For any $f:[0,1]^d\to\mathbb{R}^p$ with $f\in\mathcal{H}^\beta_p([0,1]^d,H)$ for some $H>0$ and $\epsilon>0$ there exist a constant $c>0$ and rational neural networks $\mathcal{R}_f^{\lfloor\beta\rfloor,N}$ with width of order $N$, constant depth and maximal degree of the rational activation functions of order $N^\epsilon$ for $N\in\mathbb{N}$ such that
		\[
			\Vert f-\mathcal{R}_f^{\lfloor\beta\rfloor,N}\Vert_{\mathcal{C}^1([0,1]^d)}\leq cN^{-(\beta-1)}.
		\]
	\end{theorem}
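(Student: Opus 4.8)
The plan is the one announced before the statement: $\mathcal{C}^1$-approximate the ReQU network $f_N$ provided by \eqref{simplified_rate} by a rational neural network obtained from $f_N$ upon replacing every $\requ$-activation by a suitably rescaled rational approximant from Corollary \ref{cor:requ}, and then conclude with the triangle inequality. Write $f_N=A_L\circ\rho\circ A_{L-1}\circ\cdots\circ\rho\circ A_1$ with affine maps $A_i$, $\rho=\requ$ applied componentwise, $L$ constant and all $A_i$ of size $\mathcal{O}(N)$. Since $\requ$ is positively $2$-homogeneous, $\requ(x)=a^2\requ(x/a)$ and $\requ'(x)=a\,\requ'(x/a)$ for every $a>0$, so the rescaled functions $\tilde\rho_{n,a}(x):=a^2\tilde R_n(x/a)$ built from Corollary \ref{cor:requ} satisfy
\[
\|\tilde\rho_{n,a}-\requ\|_{\mathcal{C}([-a,a])}=a^2\,\mathcal{O}(e^{-\sqrt n})\qquad\text{and}\qquad\|\tilde\rho_{n,a}'-\requ'\|_{\mathcal{C}([-a,a])}=a\,\mathcal{O}(n^{-(K-2)/2}).
\]
Clearing the denominator of $\tilde R_n(\cdot/a)$ by $a^{\,n}$ produces a polynomial of the form described in Remark \ref{remark:denominator}, i.e.\ with only even-degree monomials and positive coefficients, hence strictly positive on all of $\mathbb{R}$; this is preserved by composition with an affine map, so each $\tilde\rho_{n,a}\circ A_i$ is a genuine rational map in the sense of Definition \ref{def:rat_func}, of degree $\mathcal{O}(n)$. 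Let $C_N$ be a bound, uniform on $[0,1]^d$, for all pre-activations of $f_N$ (the inputs to the $\rho$-layers); as one reads off the explicit construction in \cite{belomestny23}, $C_N$ grows at most polynomially in $N$. I then define $\mathcal{R}_f^{\lfloor\beta\rfloor,N}$ by replacing in $f_N$ every $\rho$ by $\tilde\rho_{n,C_N}$ componentwise: this is a rational neural network of the same constant depth $L$, width $\mathcal{O}(N)$, and maximal activation degree $\mathcal{O}(n)$.

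It remains to bound $\|f_N-\mathcal{R}_f^{\lfloor\beta\rfloor,N}\|_{\mathcal{C}^1([0,1]^d)}$, which I would do by induction over the layers. Denoting by $g_i,h_i:[0,1]^d\to\mathbb{R}^{n_i}$ the outputs of the first $i$ layers of $f_N$ resp.\ of $\mathcal{R}_f^{\lfloor\beta\rfloor,N}$, I carry along the hypotheses $\|g_i-h_i\|_{\mathcal{C}^1}\le\delta_i$ and $\|h_i\|_{\mathcal{C}^1}\le B_i$. For the step, with $z=A_{i+1}g_i$, $z'=A_{i+1}h_i$, the Jacobian difference reads
\[
Dh_{i+1}-Dg_{i+1}=\big[\tilde\rho_{n,C_N}'(z')-\requ'(z)\big]\,A_{i+1}\,Dh_i+\requ'(z)\,A_{i+1}\,(Dh_i-Dg_i),
\]
and one splits $\tilde\rho_{n,C_N}'(z')-\requ'(z)=\big(\tilde\rho_{n,C_N}'(z')-\requ'(z')\big)+\big(\requ'(z')-\requ'(z)\big)$, where the first term is $\le C_N\,\mathcal{O}(n^{-(K-2)/2})$ by Corollary \ref{cor:requ} and the second is $\le 2\|A_{i+1}\|\,\|h_i-g_i\|$ since $\requ'$ is globally $2$-Lipschitz; a simpler analogous estimate controls the values. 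Because $\|A_{i+1}\|$, $B_i$ and $C_N$ are polynomial in $N$ and $L$ is fixed, iterating yields $\|f_N-\mathcal{R}_f^{\lfloor\beta\rfloor,N}\|_{\mathcal{C}^1([0,1]^d)}\le N^{p}\,\mathcal{O}(n^{-(K-2)/2})$ for a fixed exponent $p=p(\beta,d)$.

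The main obstacle — essentially the only delicate point — is this composition estimate: we have no uniform-in-$n$ control on the second derivatives of the rational activations $\tilde R_n$, but the chain-rule error above only ever features $\requ'$ as an outer derivative, and $\requ'$ is globally Lipschitz, so no $\mathcal{C}^2$-bound on the $\tilde R_n$ is needed; what remains is the routine but lengthy bookkeeping of how $\|A_{i+1}\|$, $B_i$ and $C_N$ depend on $N$, which comes from the construction in \cite{belomestny23}. Finally, by the triangle inequality,
\[
\|f-\mathcal{R}_f^{\lfloor\beta\rfloor,N}\|_{\mathcal{C}^1([0,1]^d)}\le\|f-f_N\|_{\mathcal{C}^1}+\|f_N-\mathcal{R}_f^{\lfloor\beta\rfloor,N}\|_{\mathcal{C}^1},
\]
so, given $\epsilon>0$, choosing the Newman degree $n=\lceil N^\epsilon\rceil$ (whence the activation degree is of order $N^\epsilon$) and then $K\ge3$ large enough that $\epsilon(K-2)/2\ge p+\beta-1$ makes the second summand negligible against the first, and the latter is controlled by \eqref{simplified_rate}; adjusting the constant $c$ accordingly gives the asserted bound.
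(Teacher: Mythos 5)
Your strategy of taking the ReQU network $f_N$ from \eqref{simplified_rate} as a black box and swapping each $\requ$ for a rescaled Newman-based approximant cannot deliver the rate claimed in the theorem. Even if all your bookkeeping (polynomial growth of $C_N$, $\|A_{i+1}\|$, $B_i$, and the uniform boundedness of the pre-activations of the perturbed network) were carried out, your triangle inequality ends with the term $\Vert f-f_N\Vert_{\mathcal{C}^1}$, and \eqref{simplified_rate} only controls this by $cN^{-(\beta-1)/d}$ for a width-$N$ ReQU network — not by $cN^{-(\beta-1)}$ as asserted in the theorem. For $d\geq 2$ this is a strictly weaker statement; your last sentence silently replaces the exponent $-(\beta-1)/d$ by $-(\beta-1)$. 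This is the genuine gap: the theorem's improved exponent is not inherited from the ReQU network of matching width, so no choice of $K$ or of the Newman degree $n\approx N^\epsilon$ (which only shrinks the activation-replacement error) can repair it.

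The paper avoids this by not using $f_N$ as a black box. It goes inside the construction of \cite{belomestny23}: the target is the tensor-product spline $S_f^{\lfloor\beta\rfloor,N}$ with $N$ knots per coordinate, which approximates $f$ in $\mathcal{C}^1$ at the full spline rate $N^{-(\beta-1)}$ (see \eqref{Sfestimation}). Rational approximants of $\requ$ are inserted only in the explicit representation of the order-$2$ $B$-splines; the higher-order $B$-splines are then generated by the recursion \eqref{recursion} using degree-$2$ rational layers, and — crucially — the final layer evaluates the full linear combination of $d$-fold products in \eqref{S_representation} with a single degree-$d$ rational node. A ReQU network of width order $N$ cannot realize this tensor-product stage (that is exactly where the $1/d$ in the exponent of \eqref{simplified_rate} comes from), whereas a rational output layer can, which is what yields rate $N^{-(\beta-1)}$ at width of order $N$. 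Your chain-rule error propagation (exploiting that only $\requ'$, which is globally Lipschitz, appears as outer derivative, so no $\mathcal{C}^2$ control of $\tilde R_n$ is needed) is a sensible device and would plausibly prove a version of the theorem with rate $N^{-(\beta-1)/d}$, but it does not prove the stated result.
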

	\begin{proof}
		The proof is structured into three parts: First, we recall the tensor-product-splines setting of \cite{belomestny23}. On this basis, we design the approximating rational neural network, and finally we prove the claimed approximation rate.
		
		\paragraph{Tensor-product splines in \cite{belomestny23}.}
		Following the proof of \cite[Theorem 2]{belomestny23} for $l=1$ and estimating the corresponding Hölder-norm accordingly, there exist a constant $c>0$ and tensor-product splines (which are multivariate splines) denoted by
		\begin{align}
			\label{estimSf}
			S_f^{\lfloor\beta\rfloor, N} = (S_{f,1}^{\lfloor\beta\rfloor, N}, \dots, S_{f,p}^{\lfloor\beta\rfloor, N})
		\end{align}
		of order $\lfloor \beta\rfloor$ (see \cite[Theorem 3]{belomestny23}) with knots at
		\[
			\left\{(a_{j_1},\dots, a_{j_d}):\quad j_1,\dots,j_d\in\left\{1,\dots,2\lfloor\beta\rfloor+N+1\right\}\right\}
		\]
		where $a_1=\dots=a_{\lfloor\beta\rfloor+1}=0$, $a_{\lfloor\beta\rfloor+j+1}=\frac{j}{N}$ for $1\leq j\leq N-1$ and $a_{\lfloor\beta\rfloor+N+1}=\dots=a_{2\lfloor\beta\rfloor+N+1}=1$ such that
		\begin{align}
			\label{Sfestimation}
			\Vert f-S_f^{\lfloor\beta\rfloor,N}\Vert_{\mathcal{C}^1([0,1]^d,\mathbb{R}^p)}\leq \max_{1\leq m\leq p}\Vert f_m-S_{f,m}^{\lfloor\beta\rfloor,N}\Vert_{\mathcal{C}^1([0,1]^d,\mathbb{R}^p)}\leq cN^{-(\beta-1)}.
		\end{align}
		There exist suitable coefficients $\left\{w_{m,j_1,\dots, j_d}^{(f)}: 1\leq m\leq p, 1\leq j_1,\dots, j_d\leq \lfloor\beta\rfloor+N\right\}$ for $1\leq m\leq p$ such that the spline $S_{f,m}^{\lfloor\beta\rfloor, N}$ attains the representation 
		\begin{align}
			\label{S_representation}
			S_{f,m}^{\lfloor\beta\rfloor,N}(x)=\sum_{j_1,\dots,j_d=1}^{\lfloor\beta\rfloor+N}w_{m,j_1,\dots, j_d}^{(f)}\prod_{l=1}^d(a_{j_l+\lfloor\beta\rfloor+1}-a_{j_l})B_{j_l}^{\lfloor\beta\rfloor,N}(x_l)
		\end{align}
		for $x\in[0,1]^d$ (see \cite[Equation (7)]{belomestny23}) with unnormalized $B$-splines $B_{j_l}^{\lfloor\beta\rfloor,N}$ (which are univariate basis splines, defined recursively in \cite[Equation (A.1)]{belomestny23}). To prove the claimed assertion, we consider the approximation of $S_f^{\lfloor\beta\rfloor,N}$ in $\mathcal{C}^1([0,1]^d,\mathbb{R}^p)$ by rational neural networks. As $S_{f,m}^{\lfloor\beta\rfloor,N}$ is a polynomial (and hence, rational) transformation of type $(d,0)$ or degree $d$ of the $B$-splines $B_{j_l}^{\lfloor\beta\rfloor,N}$ due to \eqref{S_representation}, this breaks down to approximating the $B_{j_l}^{\lfloor\beta\rfloor,N}$ by rational neural networks in $\mathcal{C}^1([0,1]^d)$.
		
		Following the proof of \cite[Lemma 3]{belomestny23} it suffices to consider the rational approximation of the $B$-splines $B_j^{2,N}$ for $1\leq j\leq N+2\lfloor\beta\rfloor-2$. The reason is that due to \cite[(A.1)]{belomestny23} for $m\in\left\{3,\dots,\lfloor\beta\rfloor\right\}$, $1\leq j\leq 2\lfloor\beta\rfloor+N-m$ the recursion
		\begin{align}
			\label{recursion}
			B_j^{m,N}(z) = (a_{j+m+1}-a_j)^{-1}\left((z-a_j)B_j^{m-1,N}(z)+(a_{j+m+1}-z)B_{j+1}^{m-1,N}(z)\right)
		\end{align}
		if $a_j<a_{j+m+1}$ and $B_j^{m,N}(z)=0$ otherwise, applies for $z\in[0,1]$. Hence, the $B$-spline $B_j^{m,N}$ is a rational transformation of type $(2,0)$ or degree $2$ of $z$ and the $B$-splines $B_j^{m-1,N}$. 

		Following the proof of \cite[Lemma 2]{belomestny23}, the following representations hold for the $B$-splines $B_j^{2,N}$ with $1\leq j\leq 2q+N-2$. For $z\in[0,1]$ with $q=\lfloor\beta\rfloor$ it holds
		\begin{multline*}
			B_j^{2,N}(z) = \frac{N^3}{6}\bigg(\requ\left(z-\frac{j-q-1}{N}\right)-3\requ\left(z-\frac{j-q}{N}\right)\\
			+3\requ\left(z-\frac{j-q+1}{N}\right)-\requ\left(z-\frac{j-q+2}{N}\right)\bigg)
		\end{multline*}
		for $q+1\leq j\leq q+N-2$. For $1\leq j\leq q-2$ and $q+N-1\leq j\leq 2q+N-2$ we have $B_j^{2,N}(z)=0$. The remaining splines attain the representations
		\begin{align*}
			B_{q-1}^{2,N}(z) &= N^3\requ(\frac{1}{N}-z),\\
			B_q^{2,N}(z) &= \frac{N^3}{4}\left(\requ\left(\frac{2}{N}-z\right)-4\requ\left(\frac{1}{N}-z\right)+3\requ(-z)\right),\\
			B_{q+N-1}^{2,N}(z) &= \frac{N^3}{4}\left(\requ\left(z-\frac{N-2}{N}\right)-2\requ\left(z-\frac{N-1}{N}\right)-3\requ(z-1)\right),\\
			B_{q+N}^{2,N}(z) &= N^3\requ\left(z-\frac{N-1}{N}\right).
		\end{align*}
		With this, we now design the rational neural network that approximates the tensor product spline $S_f^{\lfloor\beta\rfloor, N}$.
		
		\paragraph{Design of rational neural network.}
		As a consequence of the previous considerations, the only terms that need to be approximated by rational functions are the ReQU terms in the representation of the $B$-splines $B_j^{2,N}$. For that, we employ Corollary \ref{cor:requ}, which implies the existence of rational functions $(R_M)_M$ of type $(M+1,M-1)$ (degree $M+1$) such that for any fixed $K\in \mathbb{N}$ it holds
		\begin{align}
			\label{rate}
			\Vert \requ-R_M\Vert_{\mathcal{C}^1([-1,1])}\leq CM^{-K}
		\end{align}
		as $M\to \infty$ for a constant $C>0$. Using this, we infer for $0\leq \alpha\leq 1$ both that
		\begin{equation}
			\label{gen_asymp}
			\begin{aligned}
				&\Vert \requ(\cdot-\alpha)-R_M(\cdot-\alpha)\Vert_{\mathcal{C}^1([0,1])}\leq CM^{-K}\quad\text{and}\\
				&\Vert \requ(\alpha-\cdot)-R_M(\alpha-\cdot)\Vert_{\mathcal{C}^1([0,1])}\leq CM^{-K}.
			\end{aligned}
		\end{equation}
		Consequently, rational approximations in $\mathcal{C}^1([0,1])$ of the above-mentioned $B$-splines of order $2$ can be obtained by simply replacing the ReQU terms with the rational functions $R_M$. We are now ready to propose the rational neural network design for approximating $S_f^{\lfloor\beta\rfloor, N}$, which we will do layer by layer. 
		
		\textit{Input layer.} The input layer (with input $x\in [0,1]^d$) is assigned the index $0$. The network will consist of $\lfloor\beta\rfloor+1$ intermediate layers.
		
		\textit{Layer $1$.} Each of the $x_i$ for $1\leq i\leq d$ is mapped to $x_i-j/N$ for $0\leq j\leq N$, respectively, resulting in a layer of width $(N+1)d$. The rational functions that achieve this are all of degree $1$ (affine linear functions).
		
		\textit{Layer $2$.} In this layer the terms $R_M(x_i-j/N)$, $R_M(-x_i)$,$R_M(1/N-x_i)$, $R_M(2/N-x_i)$ are generated for $1\leq i\leq d$, $0\leq j\leq N$, which are exactly the ReQU terms occurring in representation of the $B$-splines of order $2$. The rational function that achieves this is $R_M$ with degree $M+1$. We also transfer the $x_i$ for $1\leq i\leq d$ from the previous layer via the identity (which is rational with degree $1$). The width of this layer is $(N+5)d$.
		
		\textit{Layer $3$.} In this layer the terms $B_{j}^{2,N}(x_i)$ are generated for $1\leq i\leq d$ and $\lfloor\beta\rfloor-1\leq j\leq \lfloor\beta\rfloor+N$. This can be achieved with rational functions of degree $1$. Furthermore, we transfer $x$ from the previous layer, resulting in an overall width of $(N+3)d$.
		
		\textit{Layer $m+1$ for $3\leq m\leq\lfloor\beta\rfloor$ .} In this layer the terms $B_{j}^{m,N}(x_i)$ are generated for $1\leq i\leq d$ and $\lfloor\beta\rfloor-m+1\leq j\leq \lfloor\beta\rfloor+N$. This can be achieved with rational functions of degree $2$ in view of the recursion \eqref{recursion}. Furthermore, we transfer $x$ from the previous layer (except for $m=\lfloor\beta\rfloor$), resulting in an overall width of $(N+m+1)d$.
		
		\textit{Output layer.} In the final layer the value of $S_{f}^{\lfloor\beta\rfloor,N}(x)$ is calculated employing the formula \eqref{S_representation} on layer $\lfloor\beta\rfloor+1$, which requires the application of a degree $d$ polynomial (and hence, also degree $d$ rational). The output has the $p$ entries $S_{f,l}^{\lfloor\beta\rfloor,N}(x)$ for $1\leq l\leq p$.
		
		We denote the resulting rational neural network by $\mathcal{R}_{f}^{\lfloor\beta\rfloor,N,M}$ of depth $\lfloor\beta\rfloor+2$, width $(N+\max(4,\lfloor\beta\rfloor)+1)d$ and maximal degree $M+1$ of rational functions.
		
		\paragraph{Approximation rate.} We denote by $\tilde{B}_j^{2,N,M}$ the functions obtained by changing the ReQU terms in the definition of the $B$-splines of order $2$ with the rational function $R_M$. Furthermore, we denote by $\tilde{B}_j^{m,N,M}$ for $3\leq m\leq \lfloor\beta\rfloor$ the functions derived by applying the recursion formula \eqref{recursion} to $\tilde{B}^{m-1,N,M}_j$.
		
		We start by analyzing the approximation error 
		\[
			\Vert B_{j}^{m,N}-\tilde{B}_{j}^{m,N,M}\Vert_{\mathcal{C}^1([0,1])}
		\]
		for $2\leq m\leq \lfloor\beta\rfloor$ and $\lfloor\beta\rfloor-m+1\leq j\leq \lfloor\beta\rfloor+N$.
		
		For $m=2$ it holds by the choice of $(R_M)_M$ outlined above that
		\begin{align}
			\label{2estim}
			\Vert B_{j}^{2,N}-\tilde{B}_{j}^{2,N,M}\Vert_{\mathcal{C}^1([0,1])}\leq 2CN^3M^{-K}.
		\end{align}
		For $3\leq m\leq \lfloor\beta\rfloor$ it holds due to recursion \eqref{recursion} and the definition of $\tilde{B}_j^{m,N,M}$ that
		\begin{multline*}
			\Vert B_{j}^{m,N}-\tilde{B}_{j}^{m,N,M}\Vert_{\mathcal{C}([0,1])}\leq \max_{a_j\leq z\leq a_{j+m+1}}\frac{\vert z-a_j\vert}{a_{j+m+1}-a_j}\Vert B_j^{m-1,N}-\tilde{B}_j^{m-1,N,M}\Vert_{\mathcal{C}([0,1])}\\+\max_{a_j\leq z\leq a_{j+m+1}}\frac{\vert a_{j+m+1}-z\vert}{a_{j+m+1}-a_j}\Vert B_{j+1}^{m-1,N}-\tilde{B}_{j+1}^{m-1,N,M}\Vert_{\mathcal{C}([0,1])}\\
			\leq 2\max_l\Vert B_l^{m-1,N}-\tilde{B}_l^{m-1,N,M}\Vert_{\mathcal{C}([0,1])}.
		\end{multline*}
		Similarly, for the derivatives it holds true that
		\begin{multline*}
			\Vert \frac{\dx}{\dx z}(B_{j}^{m,N}-\tilde{B}_{j}^{m,N,M})\Vert_{\mathcal{C}([0,1])}\\
			\leq \frac{N}{m+1}\Vert B_j^{m-1,N}-\tilde{B}_j^{m-1,N,M}\Vert_{\mathcal{C}([0,1])}
			+\frac{N}{m+1}\Vert B_{j+1}^{m-1,N}-\tilde{B}_{j+1}^{m-1,N,M}\Vert_{\mathcal{C}([0,1])}\\
			+\Vert \frac{\dx}{\dx z}(B_j^{m-1,N}-\tilde{B}_j^{m-1,N,M})\Vert_{\mathcal{C}([0,1])}
			+\Vert \frac{\dx}{\dx z}(B_{j+1}^{m-1,N}-\tilde{B}_{j+1}^{m-1,N,M})\Vert_{\mathcal{C}([0,1])}
		\end{multline*}
		and overall that
		\[
			\Vert B_{j}^{m,N}-\tilde{B}_{j}^{m,N,M}\Vert_{\mathcal{C}^1([0,1])}\leq 2N\max_l\Vert B_l^{m-1,N}-\tilde{B}_l^{m-1,N,M}\Vert_{\mathcal{C}^1([0,1])}.
		\]
		Resolving the recursion yields
		\[
			\Vert B_{j}^{\lfloor\beta\rfloor,N}-\tilde{B}_{j}^{\lfloor\beta\rfloor,N,M}\Vert_{\mathcal{C}^1([0,1])}\leq(2N)^{\lfloor\beta\rfloor-2}\max_l\Vert B_l^{2,N}-\tilde{B}_l^{2,N,M}\Vert_{\mathcal{C}^1([0,1])}
		\]
		and employing \eqref{2estim}, finally,
		\begin{align}
			\label{betaestimation}
			\Vert B_{j}^{\lfloor\beta\rfloor,N}-\tilde{B}_{j}^{\lfloor\beta\rfloor,N,M}\Vert_{\mathcal{C}^1([0,1])}\leq 2^{\lfloor\beta\rfloor-1}CN^{\lfloor\beta\rfloor+1}M^{-K}.
		\end{align}
		The next step consists in estimating the approximation error of the products
		\begin{align}
			\label{product_estim}
			\bigg\Vert\prod_{l=1}^dB_{j_l}^{\lfloor\beta\rfloor, N}-\prod_{l=1}^d\tilde{B}_{j_l}^{\lfloor\beta\rfloor, N,M}\bigg\Vert_{\mathcal{C}^1([0,1])}.
		\end{align}
		For that, we will need estimates of $\Vert B_j^{\lfloor\beta\rfloor,N}\Vert_{\mathcal{C}([0,1])}$ and $\Vert \tilde{B}_j^{\lfloor\beta\rfloor,N}\Vert_{\mathcal{C}([0,1])}$. Employing the recursion formula \eqref{recursion} once more yields
		\begin{align}
			\label{bestim}
			\Vert B_j^{\lfloor\beta\rfloor,N}\Vert_{\mathcal{C}([0,1])}\leq 2\Vert B_j^{\lfloor\beta\rfloor-1,N}\Vert_{\mathcal{C}([0,1])}\leq 2^{\lfloor\beta\rfloor-2}\Vert B_j^{2,N}\Vert_{\mathcal{C}([0,1])}\leq 2^{\lfloor\beta\rfloor-2}N,
		\end{align}
		where the last estimation is a consequence of $\Vert B_j^{2,N}\Vert_{\mathcal{C}([0,1])}\leq N$, which follows from direct but tedious computations. Due to \eqref{betaestimation} and the triangle inequality we directly derive that
		\begin{align}
			\label{btildeestim}
			\Vert \tilde{B}_j^{\lfloor\beta\rfloor,N}\Vert_{\mathcal{C}([0,1])}\leq 2^{\lfloor\beta\rfloor-1}CN^{\lfloor\beta\rfloor+1}M^{-K}+2^{\lfloor\beta\rfloor-2}N.
		\end{align}
		In view of \eqref{product_estim}, a telescope argument and the estimations \eqref{betaestimation},\eqref{bestim} and \eqref{btildeestim} give
		\begin{multline}
			\label{longestimate}
			\bigg\Vert\prod_{l=1}^dB_{j_l}^{\lfloor\beta\rfloor, N}-\prod_{l=1}^d\tilde{B}_{j_l}^{\lfloor\beta\rfloor, N,M}\bigg\Vert_{\mathcal{C}([0,1])}\\ \leq \bigg\Vert\sum_{l=1}^d(B_{j_l}^{\lfloor\beta\rfloor,N}-\tilde{B}_{j_l}^{\lfloor\beta\rfloor,N,M})\prod_{k<l}B_{j_k}^{\lfloor\beta\rfloor, N}\prod_{i>l}\tilde{B}_{j_i}^{\lfloor\beta\rfloor, N,M}\bigg\Vert_{\mathcal{C}([0,1])}\\
			\leq 2^{\lfloor\beta\rfloor-1}dCN^{\lfloor\beta\rfloor+1}M^{-K}(2^{\lfloor\beta\rfloor-1}CN^{\lfloor\beta\rfloor+1}M^{-K}+2^{\lfloor\beta\rfloor-2}N)^{d-1}\\
			\leq \tilde{C}N^{\lfloor\beta\rfloor+1}M^{-K}(N^{\lfloor\beta\rfloor+1}M^{-K}+N)^{d-1}
		\end{multline}
		for some $\tilde{C}>0$. It remains to estimate the first order part in \eqref{product_estim}. For that, note that by Leibniz' rule of differentiation it holds true that
		\begin{align}
			\label{product_derivative}
			\frac{\dx}{\dx z}\prod_{l=1}^dB_{j_l}^{\lfloor\beta\rfloor, N} = \sum_{l=1}^d\frac{\dx}{\dx z}B_{j_l}^{\lfloor\beta\rfloor,N}\prod_{k\neq l}B_{j_k}^{\lfloor\beta\rfloor,N}.
		\end{align}
		Furthermore, we derive by \eqref{recursion} that
		\begin{multline*}
			\frac{\dx}{\dx z}B_j^{m,N}(z)=(a_{j+m+1}-a_j)^{-1}(B_j^{m-1,N}(z)-B_{j+1}^{m-1,N}(z))\\
			+z(a_{j+m+1}-a_j)^{-1}(\frac{\dx}{\dx z}B_j^{m-1,N}(z)-\frac{\dx}{\dx z}B_{j+1}^{m-1,N}(z))
		\end{multline*}
		and as a consequence of the definition of the $a_j$ for $z\in[0,1]$ that 
		\[
			\max_j\Vert\frac{\dx}{\dx z}B_j^{m,N}\Vert_{\mathcal{C}([0,1])}\leq 2N\max_j\Vert B_j^{m-1,N}\Vert_{\mathcal{C}^1([0,1])}.
		\]
		Using similar arguments as before, this implies that
		\begin{align}
			\label{dBestimation}
		\max_j\Vert\frac{\dx}{\dx z}B_j^{\lfloor\beta\rfloor,N}\Vert_{\mathcal{C}([0,1])}\leq (2N)^{\lfloor\beta\rfloor-2}\max_j\Vert B_j^{2,N}\Vert_{\mathcal{C}^1([0,1])}\leq 5\cdot2^{\lfloor\beta\rfloor-2}N^{\lfloor\beta\rfloor},
		\end{align}
		where in the last estimate we have used that
		\[
			\max_j\Vert B_j^{2,N}\Vert_{\mathcal{C}^1([0,1])}\leq 5N^2,
		\]
		which follows again from direct but tedious computations. By \eqref{betaestimation} we obtain
		\begin{align}
			\label{dBtildeestimation}
			\max_j\Vert\frac{\dx}{\dx z}\tilde{B}_j^{\lfloor\beta\rfloor,N}\Vert_{\mathcal{C}([0,1])}\leq 2^{\lfloor\beta\rfloor-1}CN^{\lfloor\beta\rfloor+1}M^{-K}+5\cdot2^{\lfloor\beta\rfloor-2}N^{\lfloor\beta\rfloor}.
		\end{align}
		using the triangle inequality. Now using \eqref{product_derivative}, we derive that
		\begin{multline*}
			\bigg\Vert\frac{\dx}{\dx z}\prod_{l=1}^dB_{j_l}^{\lfloor\beta\rfloor, N}-\frac{\dx}{\dx z}\prod_{l=1}^d\tilde{B}_{j_l}^{\lfloor\beta\rfloor, N,M}\bigg\Vert_{\mathcal{C}([0,1])}\\
			\leq \sum_{l=1}^d \bigg\Vert\frac{\dx}{\dx z} B_{j_l}^{\lfloor\beta\rfloor,N}\prod_{k\neq l}B_{j_k}^{\lfloor\beta\rfloor,N}-\frac{\dx}{\dx z}\tilde{B}_{j_l}^{\lfloor\beta\rfloor,N}\prod_{k\neq l}\tilde{B}_{j_k}^{\lfloor\beta\rfloor,N}\bigg\Vert_{\mathcal{C}([0,1])}.
		\end{multline*}
		Applying again a telescoping argument and \eqref{betaestimation}, \eqref{bestim}, \eqref{btildeestim}, \eqref{dBestimation}, \eqref{dBtildeestimation} we obtain  
		\begin{multline*}
			\bigg\Vert\frac{\dx}{\dx z} B_{j_l}^{\lfloor\beta\rfloor,N}\prod_{k\neq l}B_{j_k}^{\lfloor\beta\rfloor,N}-\frac{\dx}{\dx z}\tilde{B}_{j_l}^{\lfloor\beta\rfloor,N}\prod_{k\neq l}\tilde{B}_{j_k}^{\lfloor\beta\rfloor,N}\bigg\Vert_{\mathcal{C}([0,1])}\\
			\leq \tilde{C}N^{\lfloor\beta\rfloor+1}M^{-K}(N^{\lfloor\beta\rfloor+1}M^{-K}+N^{\lfloor\beta\rfloor})^{d-1}
		\end{multline*}
		for some $\tilde{C}>0$. Finally, combining this with \eqref{longestimate} yields that the term in \eqref{product_estim} fulfills the estimate
		\begin{align}
			\label{finalestimate}
		\bigg\Vert\prod_{l=1}^dB_{j_l}^{\lfloor\beta\rfloor, N}-\prod_{l=1}^d\tilde{B}_{j_l}^{\lfloor\beta\rfloor, N,M}\bigg\Vert_{\mathcal{C}^1([0,1])}\leq \tilde{C}N^{\lfloor\beta\rfloor+1}M^{-K}(N^{\lfloor\beta\rfloor+1}M^{-K}+N^{\lfloor\beta\rfloor})^{d-1}
		\end{align}
		for some $\tilde{C}>0$. This allows us to estimate the approximation error of the rational neural network $\mathcal{R}_f^{\lfloor\beta\rfloor,N,M}$ in terms of $N$ and $M$ since by the triangle inequality
		\begin{align*}
			\Vert f-\mathcal{R}_f^{\lfloor\beta\rfloor,N,M}\Vert_{\mathcal{C}^1([0,1]^d,\mathbb{R}^p)}\leq c N^{-(\beta-1)}+\max_{1\leq m\leq p}\Vert S_{f,m}^{\lfloor\beta\rfloor,N}-\mathcal{R}_{f,m}^{\lfloor\beta\rfloor,N,M}\Vert_{\mathcal{C}^1([0,1]^d)}
		\end{align*}
		using \eqref{Sfestimation}. Due to the representation of the tensor-product spline $S_{f,m}^{\lfloor\beta\rfloor,N}$ in \eqref{S_representation} and similarly, of the rational neural network $\mathcal{R}_{f,m}^{\lfloor\beta\rfloor,N,M}$, we obtain that
		\begin{multline*}
			\Vert S_{f,m}^{\lfloor\beta\rfloor,N}-\mathcal{R}_{f,m}^{\lfloor\beta\rfloor,N,M}\Vert_{\mathcal{C}^1([0,1]^d)}\\
			\leq \sum_{j_1,\dots,j_d=1}^{\lfloor\beta\rfloor+N}\vert w_{m,j_1,\dots,j_d}^{(f)}\vert\bigg\vert\prod_{l=1}^d(a_{j_l+\lfloor\beta\rfloor+1}-a_{j_l})\bigg\vert\bigg\Vert\prod_{l=1}^dB_{j_l}^{\lfloor\beta\rfloor, N}-\prod_{l=1}^d\tilde{B}_{j_l}^{\lfloor\beta\rfloor, N,M}\bigg\Vert_{\mathcal{C}^1([0,1])}.
		\end{multline*}
		Using the definition of $a_j$, the estimate
		\[
			\vert w_{m,j_1,\dots,j_d}^{(f)}\vert\leq C(f,\beta,d)
		\]
		from the proof of \cite[Theorem 2]{belomestny23}, and \eqref{finalestimate}, we obtain that 
		\begin{multline*}
			\Vert S_{f,m}^{\lfloor\beta\rfloor,N}-\mathcal{R}_{f,m}^{\lfloor\beta\rfloor,N,M}\Vert_{\mathcal{C}^1([0,1]^d)}\\
			\leq (\lfloor\beta\rfloor+N)^dC(f,\beta,d)\tilde{C}N^{\lfloor\beta\rfloor+1}M^{-K}(N^{\lfloor\beta\rfloor+1}M^{-K}+N^{\lfloor\beta\rfloor})^{d-1}.
		\end{multline*}
		Now choosing for fixed $\epsilon>0$, the degree parameter of the rational functions $M\approx N^\epsilon$, one can easily verify that for fixed $K\geq \epsilon^{-1}(\beta+d+d\lfloor\beta\rfloor)$ it holds 
		\[
			\Vert S_{f,m}^{\lfloor\beta\rfloor,N}-\mathcal{R}_{f,m}^{\lfloor\beta\rfloor,N,N^\epsilon}\Vert_{\mathcal{C}^1([0,1]^d)}\leq c N^{-(\beta-1)}
		\]
		for some $c>0$. With this, we recover the claimed approximation rate
		\begin{align}
			\label{final_estimation}
			\Vert f-\mathcal{R}_f^{\lfloor\beta\rfloor,N,N^\epsilon}\Vert_{\mathcal{C}^1([0,1]^d,\mathbb{R}^p)}\leq cN^{-(\beta-1)}
		\end{align}
		for some $c>0$, with $\mathcal{R}_f^{\lfloor\beta\rfloor,N,N^\epsilon}$ fulfilling the assertions of the theorem.
	\end{proof}
	The previous result immediately yields the following approximation result for rational functions.
	\begin{corollary}
		\label{cor:rat_approx}
		Let $\beta>2$ and $p,d\in\mathbb{N}$. Furthermore, let $f:[0,1]^d\to\mathbb{R}^p$ with $f\in\mathcal{H}_p^\beta([0,1]^d,H)$ for some $H>0$ and $\epsilon>0$. Then there exist a constant $c>0$ and rational functions $R^N$ of degree of order $N^{d+\epsilon}$ for $N\in\mathbb{N}$ such that
		\[
		\Vert f-R^N\Vert_{\mathcal{C}^1([0,1]^d,\mathbb{R}^p)}\leq cN^{-(\beta-1)}
		\]
	\end{corollary}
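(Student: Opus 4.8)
The plan is to not construct anything new at all: I would take the rational neural networks already produced by Theorem~\ref{applic_theorem} and read them off as (vector‑valued) rational functions, so that the only real work is to bound their degree.

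Concretely, I would fix $\beta,p,d,f,H$ and $\epsilon$ as in the statement and apply Theorem~\ref{applic_theorem} with this $\epsilon$, obtaining for each $N\in\mathbb{N}$ a rational neural network $\mathcal{R}_f^{\lfloor\beta\rfloor,N}=\mathcal{R}_f^{\lfloor\beta\rfloor,N,N^\epsilon}$ (in the notation of that proof) of constant depth, width of order $N$, activation degree of order $N^\epsilon$, and with $\|f-\mathcal{R}_f^{\lfloor\beta\rfloor,N}\|_{\mathcal{C}^1([0,1]^d,\mathbb{R}^p)}\le cN^{-(\beta-1)}$. Since every layer map of a rational neural network is coordinatewise rational and, by Remark~\ref{rem:degree}, sums and products of rational functions are again rational, each output component of $\mathcal{R}_f^{\lfloor\beta\rfloor,N}$ is a rational function in the sense of Definition~\ref{def:rat_func}; I would set $R^N:=\mathcal{R}_f^{\lfloor\beta\rfloor,N}$, so that the $\mathcal{C}^1$‑bound is inherited verbatim, and it only remains to show $\deg(R^N)=\max_{1\le m\le p}\deg(R^N_m)=\mathcal{O}(N^{d+\epsilon})$.

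For the degree I would work with the explicit representation from the proof of Theorem~\ref{applic_theorem} rather than with the layerwise composition. With $M:=N^\epsilon$, each component $R^N_m$ equals $\sum_{j_1,\dots,j_d=1}^{\lfloor\beta\rfloor+N} w^{(f)}_{m,j_1,\dots,j_d}\prod_{l=1}^d (a_{j_l+\lfloor\beta\rfloor+1}-a_{j_l})\,\tilde B_{j_l}^{\lfloor\beta\rfloor,N,M}(x_l)$, cf.\ \eqref{S_representation}. Each $\tilde B_j^{2,N,M}$ is a fixed linear combination of at most four translated/reflected copies of $R_M$, which has degree $M+1$, so $\deg\tilde B_j^{2,N,M}\le 4(M+1)$ by Remark~\ref{rem:degree}; running the recursion \eqref{recursion}, whose only $z$‑dependent coefficients are affine, and using Remark~\ref{rem:degree} once more gives $\deg\tilde B_j^{m,N,M}\le 2\big(1+\max_l\deg\tilde B_l^{m-1,N,M}\big)$, hence after the $\lfloor\beta\rfloor-2$ recursion steps $\deg\tilde B_j^{\lfloor\beta\rfloor,N,M}\le C_1(\beta)(M+1)$. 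Therefore each product $\prod_{l=1}^d\tilde B_{j_l}^{\lfloor\beta\rfloor,N,M}(x_l)$ has degree at most $d\,C_1(\beta)(M+1)$, and since the representation of $R^N_m$ is a sum of $(\lfloor\beta\rfloor+N)^d$ such products, Remark~\ref{rem:degree} yields $\deg(R^N_m)\le (\lfloor\beta\rfloor+N)^d\,d\,C_1(\beta)(M+1)$. Plugging in $M=N^\epsilon$ and $(\lfloor\beta\rfloor+N)^d\le(2N)^d$ for large $N$ gives $\deg(R^N)\le C(\beta,d)\,N^{d+\epsilon}$, i.e.\ degree of order $N^{d+\epsilon}$, which together with the inherited $\mathcal{C}^1$‑estimate proves the corollary.

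The only point where anything is "lost" is the crude estimate $\deg(r_1+r_2)\le\deg r_1+\deg r_2$ of Remark~\ref{rem:degree} applied to the roughly $N^d$ summands of the tensor‑product spline: this is exactly what turns the $N^\epsilon$ activation degree into the $N^{d+\epsilon}$ of the unfolded rational function, and it is the one step one should be content to handle non‑sharply. I expect this to be the main (and essentially only) obstacle: trying instead to bound $\deg(R^N)$ through the composition of the layers would additionally pull in the width, yielding a worse exponent, so picking the flat representation \eqref{S_representation} as the object to differentiate the bookkeeping over is the decisive choice.
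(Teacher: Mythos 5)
Your proposal is correct and follows essentially the same route as the paper: take the network $\mathcal{R}_f^{\lfloor\beta\rfloor,N,N^\epsilon}$ from the proof of Theorem~\ref{applic_theorem}, inherit the $\mathcal{C}^1$-bound from \eqref{final_estimation}, and bound the degree via the flat representation \eqref{S_representation} — degree $\mathcal{O}(M)$ for each $\tilde B_j^{\lfloor\beta\rfloor,N,M}$ through the recursion, then $\mathcal{O}(N^d M)=\mathcal{O}(N^{d+\epsilon})$ by summing the $(\lfloor\beta\rfloor+N)^d$ products using Remark~\ref{rem:degree}. Your bookkeeping is in fact slightly more explicit than the paper's (the paper only states the orders), but the argument is the same.
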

	\begin{proof}
		Following the proof of Theorem \ref{applic_theorem}, it holds true that $\tilde{B}_j^{2, N,M}$, which is the sum of maximally four rational functions each of degree of order $M$, is a rational function of degree of order $M$ for $1\leq j\leq 2\lfloor \beta\rfloor + N-2$ (see Remark \ref{rem:degree}). Employing the representation of the $B$-splines of order $2$ in the proof of Theorem \ref{applic_theorem} together with the recursive definition of the $\tilde{B}_j^{m, N,M}$ implies that also the $\tilde{B}_j^{m, N,M}$ for $3\leq m\leq\lfloor\beta\rfloor$ are maximally of degree of order $M$, holding in particular for $m=\lfloor\beta\rfloor$. As a consequence, we derive that the rational neural network	$\mathcal{R}_{f}^{\lfloor\beta\rfloor,N, M}$, which is defined in the proof of Theorem \ref{applic_theorem} and attains the representation
		\[
		\mathcal{R}_{f}^{\lfloor\beta\rfloor,N, M}(x)=\sum_{j_1,\dots,j_d=1}^{\lfloor\beta\rfloor+N}w_{m,j_1,\dots, j_d}^{(f)}\prod_{l=1}^d(a_{j_l+\lfloor\beta\rfloor+1}-a_{j_l})\tilde{B}_{j_l}^{\lfloor\beta\rfloor,N,M}(x_l),
		\]
		is a rational function of degree of order $d(N+\lfloor\beta\rfloor)^dM$, using Remark \ref{rem:degree}. Thus, the function $R^N:=\mathcal{R}_f^{\lfloor\beta\rfloor, N, N^\epsilon}$ is a rational function of degree of order $N^{d+\epsilon}$ and fulfills the claimed estimation due to \eqref{final_estimation}.
	\end{proof}
	\begin{remark}[Denominator polynomials]
		It can be easily shown using Remark \ref{remark:denominator} and following the proof of Theorem \ref{applic_theorem} that for $1\leq j_1,\dots, j_d\leq N+\lfloor\beta\rfloor$ the rational functions $[0,1]^d\ni x\mapsto \prod_{l=1}^d\tilde{B}_{j_l}^{\lfloor\beta\rfloor,N,M}(x_l)$ in Corollary \ref{cor:rat_approx} have denominator polynomials consisting only of monomial terms of even degree with positive coefficients, and hence also $R^N$ is of this form. As a consequence, the results in Theorem \ref{applic_theorem} and Corollary \ref{cor:rat_approx} show that already this restricted class of positive polynomials as denominators of rational functions provides quite strong approximation results.
	\end{remark}
	\begin{remark}[Different ReQU approximation]
		A crucial step in proving Theorem \ref{applic_theorem} is, given the design of the rational neural network, the rational approximation of the ReQU function, which is required for the approximation of the $B$-splines $B_j^{2,N}$. At this point, one is not restricted to the approximation result based on Newman polynomials presented in Corollary \ref{cor:requ}. For example, one could try to establish the approximation result using Zolotarev sign functions as in \cite{Boulle20} and take advantage of its representational benefits.
	\end{remark}
	\begin{remark}[Different architecture]
		It is important to note that in Theorem \ref{applic_theorem}, the $\mathcal{C}^1([0,1]^d)$-approximability follows from the $\mathcal{C}^1([0,1]^d)$-approximability result of the specific architecture in \cite{belomestny23} and $\mathcal{C}^1([-1,1])$-approximability of ReQU in Corollary \ref{cor:requ}. The main ingredients are thus, an architecture attaining a higher order universal approximation property and the approximation of its activation functions by rational functions in a norm that is at least as strong as the underlying higher order norm. Considering various higher-order approximability results, such as \cite{guehring20} for approximation in $W^{m,p}$-Sobolev norms for real $0\leq m\leq 1$ using the ReLU activation, or \cite{yang25} for Sobolev norms with integer $m\geq 2$ using both ReLU and ReQU activation, this suggests to study whether ReLU or ReQU activation can be approximated by rational functions in $W^{m,p}([-1,1])$ for respective values of $m$. This is, of course, a weaker requirement than the strong approximation result shown in Corollary \ref{cor:requ}.
	\end{remark}
	
	\subsection{Physical law learning via symbolic regression}
	\label{subsec:symbolic}
	In this section, we consider consequences of Corollary \ref{cor:rat_approx} for neural network based symbolic regression for learning physical laws. The latter, in very general terms, aims to determine a simply expressible symbolic function that fits given data coming from an input-output model. Here, we focus on techniques that use specific neural network architectures, as studied in \cite{scholl24} and \cite{sahoo18,lampert17} (see these references also for details on symbolic regression for equation learning). In short, these are (shallow or deep) feed forward neural networks with rational functions instead of affine linear transformations between the layers and general unary base functions (which cannot be expressed as rational functions and may be different for the different nodes) as activations. We first discuss the architectures \cite{scholl24} and \cite{sahoo18} and then consider a neural network design that covers the architectures in \cite{scholl24, sahoo18} as special cases. Note that the architecture in \cite{sahoo18} generalizes the one in \cite{lampert17}.
	
	The ParFam architecture proposed in \cite{scholl24} is a single hidden layer feed forward neural network with rational transformations and unary base activations as prescribed above. Additionally, the authors include a rational skip connection. Thus, by Corollary \ref{cor:rat_approx}, for increasing complexity of the rational degree of the skip connection, $\mathcal{C}^1$-approximability of suitably regular functions as in Corollary \ref{cor:rat_approx} is feasible.
	
	The $\text{EQL}^\div$ architecture proposed in \cite{sahoo18} is a deep feed forward neural network where the transformations between the layers are affine linear, the activations are \textit{well-behaving} unary base functions (functions such as $\exp$ and the square root are excluded) and multiplications and in the final layer a possible division is realized. The reason for these restrictive choices (compared to \cite{scholl24}) are of practical nature. 
	
	We consider the following architecture type, which covers the architectures in \cite{scholl24, sahoo18} as special cases, and show an approximation result of functions $f:[0,1]^d\to\mathbb{R}^p$ based on Corollary \ref{cor:rat_approx}. Let $d,L\in\mathbb{N}$ and consider vector valued $d$-ary base functions $\sigma_i:\mathbb{R}^d\to \mathbb{R}^d$ (which covers also $d$ unary base functions stacked in a $d$-dimensional array) for $1\leq i\leq d$. Furthermore, we assume $r_i:\mathbb{R}^d\to\mathbb{R}^d$ to be vector-valued rational functions of degree $N_i\in\mathbb{N}$, respectively, for $1\leq i\leq L$, where the coefficients of the numerator and denominator polynomials are trainable parameters. Finally, let $r_{L+1}:\mathbb{R}^d\to \mathbb{R}^p$ be similarly a vector-valued rational function of degree $N_{L+1}\in\mathbb{N}$. A general realization of the studied network is
	\begin{align}
		\label{symbolic_network}
		\mathfrak{S}_{r,\sigma}=r_{L+1}\circ\sigma_L\circ r_L\circ\dots\circ \sigma_1\circ r_1.
	\end{align}
	First, we formulate a pointwise $\mathcal{C}^1$-approximation result, essentially under regularity assumptions on the activation functions $(\sigma_i)_i$.
	\begin{lemma}
		\label{lem:symbolic}
		Let $\beta>2$ and $L,p,d\in\mathbb{N}$. Furthermore, let $f:[0,1]^d\to\mathbb{R}^p$ and $\sigma_i:\mathbb{R}^d\to\mathbb{R}^d$ locally Lipschitz continuous with $f\in\mathcal{H}^\beta_p([0,1]^d)$ and $\sigma_i^{-1}\in\mathcal{H}^\beta_d([0,1]^d)$ (with $\sigma^{-1}$ denoting the reverse function and being defined on $[0,1]^d$) for $1\leq i\leq L$. Then there exist rational functions $(r_i^n)_{n\in\mathbb{N}}$ for $1\leq i\leq L+1$ such that
		\begin{align}
			\label{pwconv}
			\lim_{n\to \infty}\mathfrak{S}_{r^n,\sigma}(x)=f(x)
		\end{align}
		for every $x\in[0,1]^d$. If $\nabla\sigma_i$ exists and is locally Lipschitz continuous then also
		\begin{align}
			\label{gradconv}
			\lim_{n\to\infty}\nabla\mathfrak{S}_{r^n,\sigma}(x)=\nabla f(x)
		\end{align}
		for every $x\in[0,1]^d$.
	\end{lemma}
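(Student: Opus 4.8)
The plan is to build the rational functions $r_i^n$ for $1 \leq i \leq L+1$ essentially as rational approximations (in $\mathcal{C}^1$) of the inverse activations $\sigma_i^{-1}$, chained together with the rational approximation of $f$ obtained from Corollary \ref{cor:rat_approx}, so that the composition telescopes. The key structural observation is that in the network \eqref{symbolic_network}, inserting $\sigma_i$ immediately after $r_i$ means that if we choose $r_i$ to be close to $\sigma_{i-1}^{-1} \circ (\text{the next building block})$ — more precisely, if we choose $r_i$ close to $\sigma_i^{-1}$ composed with $r_{i+1}$'s ``target'' — then $\sigma_i \circ r_i$ will be close to that target. Concretely, I would set up a backward recursion: let $g_{L+1}$ be a rational $\mathcal{C}^1$-approximation of $f$ on $[0,1]^d$ from Corollary \ref{cor:rat_approx} (noting $f \in \mathcal{H}^\beta_p([0,1]^d)$), then recursively let $r_{L+1}^n := g_{L+1}$ with accuracy $1/n$, and for $i = L, L-1, \dots, 1$ define $r_i^n$ to be a rational $\mathcal{C}^1$-approximation (again via Corollary \ref{cor:rat_approx}, applied to $\sigma_i^{-1} \in \mathcal{H}^\beta_d([0,1]^d)$) of $\sigma_i^{-1}$, so that $\sigma_i \circ r_i^n \approx \mathrm{id}$ on the relevant domain. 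Then $\mathfrak{S}_{r^n,\sigma} = r_{L+1}^n \circ \sigma_L \circ r_L^n \circ \cdots \circ \sigma_1 \circ r_1^n$ should converge to $g_{L+1} \circ \mathrm{id} \circ \cdots \circ \mathrm{id} = g_{L+1} \approx f$, and letting $n \to \infty$ gives \eqref{pwconv}.

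The execution then requires controlling the propagation of error through the composition. First I would fix a compact set containing all the intermediate images: since each $\sigma_i^{-1}$ is defined on $[0,1]^d$ and we approximate it there, I need the outputs of each stage to land (asymptotically) in $[0,1]^d$; this is where I would use that $\sigma_i \circ \sigma_i^{-1} = \mathrm{id}$ on $[0,1]^d$ together with local Lipschitz continuity of $\sigma_i$ to absorb small perturbations — choosing the approximation accuracy at stage $i$ small enough relative to the Lipschitz constants of $\sigma_i, \dots, \sigma_L$ and of $r_{L+1}^n$ restricted to a slightly enlarged box. I would proceed by a finite induction on the number of composed layers: if $h^n \to h$ uniformly on a compact neighborhood and $h$ maps into $[0,1]^d$, and $r_i^n \to \sigma_i^{-1}$ in $\mathcal{C}^1$ on $[0,1]^d$, then $\sigma_i \circ r_i^n \circ h^n \to \sigma_i \circ \sigma_i^{-1} \circ h = h$, using uniform continuity of $\sigma_i$ on compacts and the uniform convergence $r_i^n \circ h^n \to \sigma_i^{-1} \circ h$. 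Iterating $L$ times yields \eqref{pwconv}, in fact uniformly on $[0,1]^d$.

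For the gradient statement \eqref{gradconv}, I would differentiate the composition via the chain rule: $\nabla \mathfrak{S}_{r^n,\sigma}(x)$ is a product of Jacobian matrices $\nabla r_{L+1}^n \cdot \nabla \sigma_L \cdot \nabla r_L^n \cdots \nabla \sigma_1 \cdot \nabla r_1^n$ evaluated at the successive intermediate points. The additional hypothesis that $\nabla \sigma_i$ exists and is locally Lipschitz gives that $\nabla \sigma_i$ is continuous, hence uniformly continuous on compacts, so $\nabla \sigma_i$ evaluated at the (convergent) intermediate points converges to $\nabla \sigma_i$ at the limit points. Combined with $\nabla r_i^n \to \nabla \sigma_i^{-1}$ uniformly (the $\mathcal{C}^1$ part of Corollary \ref{cor:rat_approx}) and $\nabla r_{L+1}^n \to \nabla f$, and the identity $\nabla \sigma_i(\sigma_i^{-1}(y)) \cdot \nabla \sigma_i^{-1}(y) = I$ (from differentiating $\sigma_i \circ \sigma_i^{-1} = \mathrm{id}$, valid since both are differentiable and the composition is the identity), the telescoping product collapses to $\nabla f(x)$. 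The main obstacle I anticipate is purely bookkeeping: ensuring all intermediate evaluation points remain in a fixed compact set on which the relevant Lipschitz and uniform-continuity bounds hold, and choosing the sequence of accuracies at the $L+1$ stages consistently (innermost accuracies depending on the Lipschitz constants of all outer maps); this is a finite nested choice and presents no conceptual difficulty, but must be stated carefully so that the product-of-Jacobians estimate is uniformly controlled as $n \to \infty$.
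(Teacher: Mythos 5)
Your proposal follows essentially the same route as the paper: choose $r_i^n$ as $\mathcal{C}^1$-rational approximations of $\sigma_i^{-1}$ via Corollary \ref{cor:rat_approx} (and $r_{L+1}^n$ of $f$), so that each $\sigma_i\circ r_i^n$ is close to the identity, then propagate the error through the composition by a finite induction over the layers, and for the gradients combine the chain rule with $\nabla\sigma_i(\sigma_i^{-1}(y))\nabla\sigma_i^{-1}(y)=I$ and the (local) Lipschitz continuity of $\nabla\sigma_i$. The compactness/bookkeeping issues you flag are exactly the points the paper handles with local Lipschitz constants at the relevant points, so your argument is correct and matches the paper's proof.
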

	\begin{proof}
	There exist rational functions $(r_i^n)_{n\in\mathbb{N}}$ for $1\leq i\leq L+1$ with
		\begin{align}
			\label{rnconvergence}
			\lim_{n\to \infty}\Vert \sigma_i^{-1}-r_i^n\Vert_{\mathcal{C}^1([0,1]^d,\mathbb{R}^d)} = 0\quad \text{and}\quad \lim_{n\to \infty}\Vert f-r_{L+1}^n\Vert_{\mathcal{C}^1([0,1]^d,\mathbb{R}^p)} = 0
		\end{align}
		for $1\leq i\leq L$ by Corollary \ref{cor:rat_approx}. %
		We define for $1\leq i\leq L$ the networks
		\[
			\mathcal{S}_i^n = \sigma_i\circ r_i^n\circ\dots\circ\sigma_1\circ r_1^n
		\]
		and prove first via induction that for $1\leq i\leq L$,
		\begin{align}
			\label{claimSin}
		\lim_{n\to \infty}\mathcal{S}_i^n(x)=x\quad\text{and}\quad\lim_{n\to\infty}\nabla\mathcal{S}_i^n(x)=I
		\end{align}
		with $I\in\mathbb{R}^{d\times d}$ the identity matrix. For $i=1$ by local Lipschitz continuity of $\sigma_1$ there exists $\tilde{\delta}>0$ and $L_{\sigma_1^{-1}(x)}^{\sigma_1}>0$ such that $\sigma_1$ is Lipschitz continuous in $\mathcal{B}_{\tilde{\delta}}(\sigma_1^{-1}(x))$ with Lipschitz constant $L_{\sigma_1^{-1}(x)}^{\sigma_1}$. By \eqref{rnconvergence} for $0<\delta<\tilde{\delta}$ and sufficiently large $n$ it holds $\Vert \sigma_1^{-1}-r_1^n\Vert_{\mathcal{C}^1([0,1]^d,\mathbb{R}^d)}<\delta$. As a consequence, we derive the estimate
		\[
			\vert \mathcal{S}_1^n(x)-x\vert=\vert \sigma_1(r_1^n(x))-\sigma_1(\sigma_1^{-1}(x))\vert\leq L_{\sigma_1^{-1}(x)}^{\sigma_1}\vert r_1^n(x)-\sigma_1^{-1}(x)\vert < L_{\sigma_1^{-1}(x)}^{\sigma_1}\delta,
		\]
		which can be made arbitrarily small by the choice of $\delta$ and shows the first part in \eqref{claimSin} for $i=1$. The second part follows under the additional assumption of local Lipschitz continuity of $\nabla\sigma_1$. For that, let $\tilde{\delta}$ be small enough such that $\nabla\sigma_1$  is Lipschitz continuous in $\mathcal{B}_{\tilde{\delta}}(\sigma_1^{-1}(x))$ with Lipschitz constant $L_{\sigma_1^{-1}(x)}^{\nabla\sigma_1}$. Then inserting a productive zero and using $I=\nabla[(\sigma_1\circ\sigma_1^{-1})(x)]=\nabla\sigma_1(\sigma_1^{-1}(x))\nabla\sigma_1^{-1}(x)$ together with the triangle inequality implies that
		\begin{align*}
			\vert \nabla\mathcal{S}_1^n(x)-I\vert&\leq \vert\nabla\sigma_1(r_1^n(x))-\nabla\sigma_1(\sigma_1^{-1}(x))\vert\vert\nabla r_1^n(x)\vert\\
			&\qquad +\vert\nabla\sigma_1(\sigma_1^{-1}(x))\vert\vert\nabla r_1^n(x)-\nabla\sigma_1^{-1}(x)\vert\\
			&\leq L_{\sigma_1^{-1}(x)}^{\nabla\sigma_1}\vert r_1^n(x)-\sigma_1^{-1}(x)\vert(\delta+\vert\nabla\sigma_1^{-1}(x)\vert)+\vert\nabla\sigma_1(\sigma_1^{-1}(x))\vert\delta\\
			&\leq L_{\sigma_1^{-1}(x)}^{\nabla\sigma_1}\delta(\delta+\vert\nabla\sigma_1^{-1}(x)\vert)+\vert\nabla\sigma_1(\sigma_1^{-1}(x))\vert\delta,
		\end{align*}
		which can be made arbitrarily small again by choosing $\delta$ small.
		
		For the induction step assume that \eqref{claimSin} holds for some $i\leq L-1$. By local Lipschitz continuity of $\sigma_{i+1}^{-1}$ (it is even assumed to be higher order Hölder continuous) there exists $\epsilon>0$ and $L^{\sigma_{i+1}^{-1}}_x>0$ such that $\sigma_{i+1}^{-1}$ is Lipschitz continuous in $\mathcal{B}_\epsilon(x)$ with Lipschitz constant $L_x^{\sigma_{i+1}^{-1}}$. By induction hypothesis for sufficiently large $n$ it holds true that $\vert \mathcal{S}_i^n(x)-x\vert<\epsilon$. By \eqref{rnconvergence} for any $\delta>0$ we have that $\Vert \sigma_{i+1}^{-1}-r_{i+1}^n\Vert_{\mathcal{C}^1([0,1]^d,\mathbb{R}^d)}<\delta$ for sufficiently large $n$. Thus, we obtain
		\[
			\vert r_{i+1}^n(\mathcal{S}_i^n(x))-\sigma_{i+1}^{-1}(x)\vert\leq \vert r_{i+1}^n(\mathcal{S}_i^n(x))-\sigma_{i+1}^{-1}(\mathcal{S}_i^n(x))\vert+\vert \sigma_{i+1}^{-1}(\mathcal{S}_i^n(x))-\sigma_{i+1}^{-1}(x)\vert
		\]
		and hence, using the previous estimate for sufficiently large $n$ that
		\[
			\vert r_{i+1}^n(\mathcal{S}_i^n(x))-\sigma_{i+1}^{-1}(x)\vert\leq \delta+\epsilon L_x^{\sigma_{i+1}^{-1}}.
		\]
		By local Lipschitz continuity of $\sigma_{i+1}$ there exists $\tilde{\delta}>0$ and $L_{\sigma_{i+1}^{-1}(x)}^{\sigma_{i+1}}>0$ such that $\sigma_{i+1}$ is Lipschitz continuous in $\mathcal{B}_{\tilde{\delta}}(\sigma_{i+1}^{-1}(x))$ with Lipschitz constant $L_{\sigma_{i+1}^{-1}(x)}^{\sigma_{i+1}}$.
		As a consequence, we infer for small $\delta, \epsilon$ with $\delta+\epsilon L_x^{\sigma_{i+1}^{-1}}<\tilde{\delta}$ (which, in particular implies by the above considerations that $r_{i+1}^n(\mathcal{S}_i^n(x)), \sigma_{i+1}^{-1}(\mathcal{S}_i^n(x))\in \mathcal{B}_{\tilde{\delta}}(\sigma_{i+1}^{-1}(x))$) that
		\begin{align*}
			\vert \mathcal{S}_{i+1}^n(x)-x\vert &\leq \vert \mathcal{S}_{i+1}^n(x)- \mathcal{S}_{i}^n(x)\vert +\vert\mathcal{S}_{i}^n(x)-x\vert\\
			&\leq \vert \sigma_{i+1}(r_{i+1}^n(\mathcal{S}_i^n(x)))-\sigma_{i+1}(\sigma_{i+1}^{-1}(\mathcal{S}_i^n(x)))\vert+\epsilon\\
			&\leq L_{\sigma_{i+1}^{-1}(x)}^{\sigma_{i+1}}(\delta+\epsilon L_x^{\sigma_{i+1}^{-1}})+\epsilon,
		\end{align*}
		which can be made arbitrarily small by choosing $\delta, \epsilon$ sufficiently small, showing the first part in \eqref{claimSin}. The second part follows under the additional assumption of local Lipschitz continuity of $\nabla\sigma_{i+1}$. For that, let $\tilde{\delta}$ be small enough such that $\nabla\sigma_{i+1}$  is Lipschitz continuous in $\mathcal{B}_{\tilde{\delta}}(\sigma_{i+1}^{-1}(x))$ with Lipschitz constant $L_{\sigma_{i+1}^{-1}(x)}^{\nabla\sigma_{i+1}}$. As
		\[
			\nabla\mathcal{S}_{i+1}^n(x)=\nabla\sigma_{i+1}(r_{i+1}^n(\mathcal{S}_i^n(x)))\nabla r_{i+1}^n(\mathcal{S}_i^n(x))\nabla\mathcal{S}_i^n(x)
		\]
		by the chain rule and $\lim_{n\to\infty}\nabla\mathcal{S}_i^n(x)=I$ by assumption it suffices to show that
		\[
			\lim_{n\to \infty}\nabla\sigma_{i+1}(r_{i+1}^n(\mathcal{S}_i^n(x)))\nabla r_{i+1}^n(\mathcal{S}_i^n(x))=I.
		\]
		Indeed, we can estimate similarly as in the induction start,
		\begin{multline*}
			\vert \nabla\sigma_{i+1}(r_{i+1}^n(\mathcal{S}_i^n(x)))\nabla r_{i+1}^n(\mathcal{S}_i^n(x))-I\vert\\
			\leq \vert\nabla\sigma_{i+1}(r_{i+1}^n(\mathcal{S}_i^n(x)))-\nabla\sigma_{i+1}(\sigma_{i+1}^{-1}(\mathcal{S}_i^n(x)))\vert\vert\nabla r_{i+1}^n(\mathcal{S}_i^n(x))\vert\\
			+\vert\nabla\sigma_{i+1}(\sigma_{i+1}^{-1}(\mathcal{S}_i^n(x)))\vert\vert\nabla r_{i+1}^n(\mathcal{S}_i^n(x))-\nabla\sigma_{i+1}^{-1}(\mathcal{S}_i^n(x))\vert,
		\end{multline*}
		which can be further estimated by
		\[
			\delta L_{\sigma_{i+1}^{-1}(x)}^{\nabla\sigma_{i+1}}(\vert\nabla\sigma_{i+1}^{-1}(\mathcal{S}_i^n(x))\vert+\delta)+\vert\nabla\sigma_{i+1}(\sigma_{i+1}^{-1}(\mathcal{S}_i^n(x)))\vert\delta.
		\]
		This upper bound can be made arbitrarily small by choice of $\delta$ as the terms $\vert\nabla\sigma_{i+1}^{-1}(\mathcal{S}_i^n(x))\vert$ and $\vert\nabla\sigma_{i+1}(\sigma_{i+1}^{-1}(\mathcal{S}_i^n(x)))\vert$ are bounded for sufficiently large $n$ due to $\vert \mathcal{S}_i^n(x)-x\vert<\epsilon$, and local Lipschitz continuity of $\nabla\sigma_{i+1}$, $\sigma_{i+1}^{-1}$ and $\nabla\sigma_{i+1}^{-1}$ (the latter is assumed to be even higher order Hölder continuous).\\
		
		This implies in particular that
		\begin{align}
			\label{SLpwconv}
			\lim_{n\to \infty}\mathcal{S}_L^n(x)=x
		\end{align}
		for $x\in[0,1]^d$ and additionally, under local Lipschitz continuity of the $\nabla\sigma_i$ also
		\begin{align}
			\label{SLgradconv}
		\lim_{n\to \infty}\nabla\mathcal{S}_L^n(x)=I.
		\end{align}
		As $\mathfrak{S}_{r^n,\sigma}(x)=r_{L+1}^n(\mathcal{S}_L^n(x))$ the claim in \eqref{pwconv} follows using \eqref{rnconvergence}, \eqref{SLpwconv} and continuity of $f$. Similarly, the claim in \eqref{gradconv} follows using \eqref{rnconvergence}, \eqref{SLgradconv}, continuity of $\nabla f$ and
		\begin{align*}
			\vert \nabla \mathfrak{S}_{r^n,\sigma}(x)-\nabla f(x)\vert &= \vert\nabla r_{L+1}^n(\mathcal{S}_L^n(x))\nabla\mathcal{S}_L^n(x)-\nabla f(x)\vert\\
			&\leq\vert \nabla r_{L+1}^n(\mathcal{S}_L^n(x))-\nabla f(\mathcal{S}_L^n(x))\vert\vert\nabla\mathcal{S}_L^n(x)\vert\\
			&\quad +\vert\nabla f(\mathcal{S}_L^n(x))
-\nabla f(x)\vert\vert\nabla \mathcal{S}_L^n(x)\vert\\
			&\quad +\vert \nabla \mathcal{S}_L^n(x)-I\vert\vert\nabla f(x)\vert.
		\end{align*}
	\end{proof}
	\begin{remark}[Examples of activations]
		In the previous result, we require that the activations $\sigma_i:\mathbb{R}^d\to\mathbb{R}^d$ are locally Lipschitz continuous and satisfy $\sigma_i^{-1}\in\mathcal{H}^\beta_d([0,1]^d)$ for $1\leq i\leq L$. In view of the necessity of the latter condition, which is generally restrictive, we refer to the remark below. Examples for $d=1$ include $\sigma(x)=\exp(x)-1$ and $\sigma(x)=2\arctan(x)$, which fulfill $\sigma^{-1}\in\mathcal{H}^\beta_1([0,1])$ for $\beta>2$.
	\end{remark}
	\begin{corollary}
		The result of Lemma \ref{lem:symbolic} holds uniformly in \eqref{pwconv} and \eqref{gradconv} if the local Lipschitz continuity assumption is replaced by Lipschitz continuity, respectively.
	\end{corollary}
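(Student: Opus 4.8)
The plan is to revisit the proof of Lemma~\ref{lem:symbolic} and observe that every estimate there is quantitative: it bounds the relevant quantity --- namely $\vert\mathcal{S}_i^n(x)-x\vert$, $\vert\nabla\mathcal{S}_i^n(x)-I\vert$, and ultimately $\vert\mathfrak{S}_{r^n,\sigma}(x)-f(x)\vert$ and $\vert\nabla\mathfrak{S}_{r^n,\sigma}(x)-\nabla f(x)\vert$ --- in terms of three kinds of ingredients: the $\mathcal{C}^1$-approximation errors $\Vert\sigma_i^{-1}-r_i^n\Vert_{\mathcal{C}^1([0,1]^d,\mathbb{R}^d)}$ and $\Vert f-r_{L+1}^n\Vert_{\mathcal{C}^1([0,1]^d,\mathbb{R}^p)}$ from \eqref{rnconvergence}, the free radii $\delta,\epsilon$, and constants that are \emph{local} Lipschitz constants or local sup-bounds of $\sigma_i,\nabla\sigma_i,\sigma_i^{-1},\nabla\sigma_i^{-1}$ near $x$ (or near $\sigma_i^{-1}(x)$). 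The only mechanism by which uniformity in $x$ can be lost is through these local constants; hence, if they can be replaced by $x$-independent ones, the same chain of inequalities yields \emph{uniform} convergence in \eqref{pwconv} and \eqref{gradconv}.

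I would first record the uniform bounds available under the strengthened hypotheses. Global Lipschitz continuity of $\sigma_i$ provides a single constant $L^{\sigma_i}$ and, in particular, the global bound $\Vert\nabla\sigma_i\Vert_\infty\le L^{\sigma_i}$; global Lipschitz continuity of $\nabla\sigma_i$ provides a single constant $L^{\nabla\sigma_i}$. Since $\beta>2$, the hypothesis $\sigma_i^{-1}\in\mathcal{H}^\beta_d([0,1]^d)$ forces $\sigma_i^{-1}\in\mathcal{C}^2([0,1]^d,\mathbb{R}^d)$, so $\sigma_i^{-1}$ and $\nabla\sigma_i^{-1}$ are bounded on the compact cube $[0,1]^d$ and $\nabla\sigma_i^{-1}$ is Lipschitz there; moreover $\Vert\nabla r_i^n\Vert_{\mathcal{C}([0,1]^d)}\le\Vert\nabla\sigma_i^{-1}\Vert_{\mathcal{C}([0,1]^d)}+\Vert\sigma_i^{-1}-r_i^n\Vert_{\mathcal{C}^1([0,1]^d,\mathbb{R}^d)}$ is bounded uniformly in $n$. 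With these in hand, the neighbourhoods $\mathcal{B}_{\tilde{\delta}}(\sigma_i^{-1}(x))$ and $\mathcal{B}_\epsilon(x)$ occurring in the proof of Lemma~\ref{lem:symbolic} may be taken to be all of $\mathbb{R}^d$, respectively all of $[0,1]^d$, and every subscripted constant $L_{\sigma_i^{-1}(x)}^{\sigma_i}$, $L_{\sigma_i^{-1}(x)}^{\nabla\sigma_i}$, $L_x^{\sigma_{i+1}^{-1}}$ is replaced by its $x$-independent counterpart.

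Then I would rerun the induction of Lemma~\ref{lem:symbolic} verbatim, but with $\sup_{x\in[0,1]^d}$ prepended to each inequality, establishing $\sup_{x\in[0,1]^d}\vert\mathcal{S}_i^n(x)-x\vert\to 0$ and --- under the additional assumption on $\nabla\sigma_i$ --- $\sup_{x\in[0,1]^d}\vert\nabla\mathcal{S}_i^n(x)-I\vert\to 0$ as $n\to\infty$ for $1\le i\le L$. The base case follows from $\vert\mathcal{S}_1^n(x)-x\vert\le L^{\sigma_1}\Vert r_1^n-\sigma_1^{-1}\Vert_{\mathcal{C}([0,1]^d)}$ and the analogous gradient estimate, whose right-hand sides are $x$-independent and vanish by \eqref{rnconvergence}; the induction step is the identical sequence of triangle inequalities, now using global constants and the fact that $\sup_x\vert\mathcal{S}_i^n(x)-x\vert<\epsilon$ for all large $n$ is itself a uniform statement. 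Finally, combining $\mathfrak{S}_{r^n,\sigma}(x)=r_{L+1}^n(\mathcal{S}_L^n(x))$ with $\Vert f-r_{L+1}^n\Vert_{\mathcal{C}^1([0,1]^d,\mathbb{R}^p)}\to 0$, the uniform convergence $\sup_x\vert\mathcal{S}_L^n(x)-x\vert\to 0$, and the uniform continuity of $f$ (respectively of $\nabla f$, together with the uniformly bounded factors $\nabla r_{L+1}^n$ and $\nabla\mathcal{S}_L^n$) on the compact cube $[0,1]^d$ reproduces the last display of the proof of Lemma~\ref{lem:symbolic} with a supremum in front, giving the uniform versions of \eqref{pwconv} and \eqref{gradconv}.

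The only subtlety beyond this bookkeeping --- the step I expect to need the most care --- is to guarantee that the inner iterates $\mathcal{S}_i^n(x)$ remain in one fixed compact set, uniformly in $x\in[0,1]^d$ and in all large $n$, so that the $\mathcal{C}^1$-approximation errors of the $r_{i+1}^n$ and the regularity of $\sigma_{i+1}^{-1},\nabla\sigma_{i+1}^{-1}$ --- controlled only on $[0,1]^d$ --- remain applicable along the composition, and so that the factors $\nabla r_{i+1}^n(\mathcal{S}_i^n(x))$ and $\nabla\sigma_{i+1}$ stay uniformly bounded. This is exactly what the uniform induction bound $\sup_x\vert\mathcal{S}_i^n(x)-x\vert<\epsilon$ together with global Lipschitz continuity delivers, so no genuinely new argument is required beyond the one already present in Lemma~\ref{lem:symbolic}.
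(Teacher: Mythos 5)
Your proposal is correct and follows essentially the same route as the paper, which likewise just reruns the proof of Lemma \ref{lem:symbolic} with global Lipschitz constants in place of the local ones and invokes compactness of $[0,1]^d$ (boundedness of images of bounded sets under globally defined continuous maps, uniform continuity of $f$ and $\nabla f$) to make every estimate uniform in $x$. Your elaboration is in fact more detailed than the paper's two-line argument, including the same observation about keeping the iterates $\mathcal{S}_i^n(x)$ in a fixed compact set.
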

	\begin{proof}
		The assertion can be shown following the proof of Lemma \ref{lem:symbolic} and simplifying arguments along the lines based on global Lipschitz continuity. Furthermore, one has to use that bounded sets under globally continuous functions (i.e. on entire $\mathbb{R}^d$) are bounded and the preimage of a compact set under a continuous function is compact.
	\end{proof}
	\begin{remark}
		What is unsatisfactory about the proof of Lemma \ref{lem:symbolic} is its nature of attempting to cancel the effect of the activation functions and essentially using one rational transformation to approximate the target function $f$ accordingly. Approaches that exploit the non-rationality of the activation functions would be interesting in this context, particularly in order to avoid potentially strong regularity assumptions about the inverse functions of the activations. Nevertheless, it is important to note that, from a practical point of view, an optimally trained network of the above type performs at least as well as the rates proven in the previous results.
	\end{remark}

	\section{Conclusions}
	In this work, we have considered $\mathcal{C}^1$ approximability of functions by rational neural networks, using a spline-based ReQU neural network architecture that attains a higher order universal approximation property. This result is further transferred to an analogous approximation result for rational functions. Furthermore, an application to neural network based symbolic regression for physical law learning is considered.
	
	Future research directions comprise investigating better first order approximation power under a lower degree of the rational functions and $\mathcal{C}^ k$-approximability with $k \geq 2$.
	
	\bibliographystyle{plain}
	\bibliography{references}
\end{document}